\def\eqref#1{equation~\ref{#1}}
\def\1{\bm{1}}
\DeclareMathAlphabet{\mathsfit}{\encodingdefault}{\sfdefault}{m}{sl}
\SetMathAlphabet{\mathsfit}{bold}{\encodingdefault}{\sfdefault}{bx}{n}
\newcommand{\R}{\mathbb{R}}
\DeclareMathOperator*{\argmin}{arg\,min}
\newcommand{\m}{\mathbf{m}}
\newcommand{\x}{\mathbf{x}}
\newcommand{\y}{\mathbf{y}}
\newcommand{\z}{\mathbf{z}}
\newcommand{\hessian}{\mathbf{H}}
\newcommand{\GG}{\mathbf{G}}
\newcommand{\jacobian}{\mathbf{J}}
\newcommand{\iden}{\mathbf{I}}
\newcommand{\zero}{\mathbf{0}}
\newcommand{\data}{\mathcal{D}}
\newcommand{\gaussian}{\mathcal{N}}
\newcommand{\inv}{^{\raisebox{.2ex}{$\scriptscriptstyle-1$}}}
\newtheorem{theorem}{Theorem}
\newtheorem{rem}{Remark}
\newtheorem{prop}{Proposition} 
\newtheorem{assumption}{Assumption}
\newtheorem{definition}{Definition} 
\providecommand{\eg}{\textit{e.g.}\@\xspace}
\providecommand{\ie}{\textit{i.e.}\@\xspace}
\definecolor{mydarkblue}{rgb}{0,0.1,0.6}
\definecolor{mydarkred}{rgb}{0.6,0.1,0}
\title{On Solving Minimax Optimization Locally: \\ A Follow-the-Ridge Approach}
\author{Yuanhao Wang\thanks{These two authors contributed equally.}${}^{\ast 1}$, Guodong Zhang\footnotemark[1]${}^{\ast 2,3}$, Jimmy Ba${}^{2,3}$ \\
${}^{1}$IIIS, Tsinghua University, ${}^{2}$University of Toronto, ${}^{3}$Vector Institute \\
\texttt{yuanhao-16@mails.tsinghua.edu.cn, \{gdzhang,jba\}@cs.toronto.edu}
}
\begin{document}

\maketitle

\begin{abstract}
Many tasks in modern machine learning can be formulated as finding equilibria in \emph{sequential} games. In particular, two-player zero-sum sequential games, also known as minimax optimization, have received growing interest. It is tempting to apply gradient descent to solve minimax optimization given its popularity and success in supervised learning. However, it has been noted that naive application of gradient descent fails to find some local minimax and can converge to non-local-minimax points. In this paper, we propose \emph{Follow-the-Ridge} (FR), a novel algorithm that provably converges to and only converges to local minimax. We show theoretically that the algorithm addresses the notorious rotational behaviour of gradient dynamics, and is compatible with preconditioning and \emph{positive} momentum. Empirically, FR solves toy minimax problems and improves the convergence of GAN training compared to the recent minimax optimization algorithms. 
\end{abstract}

\section{Introduction}
\vspace{-0.1cm}
\begin{wrapfigure}[18]{R}{0.4\textwidth}
	\centering
    \vspace{-0.6cm}
    \includegraphics[width=0.395\textwidth]{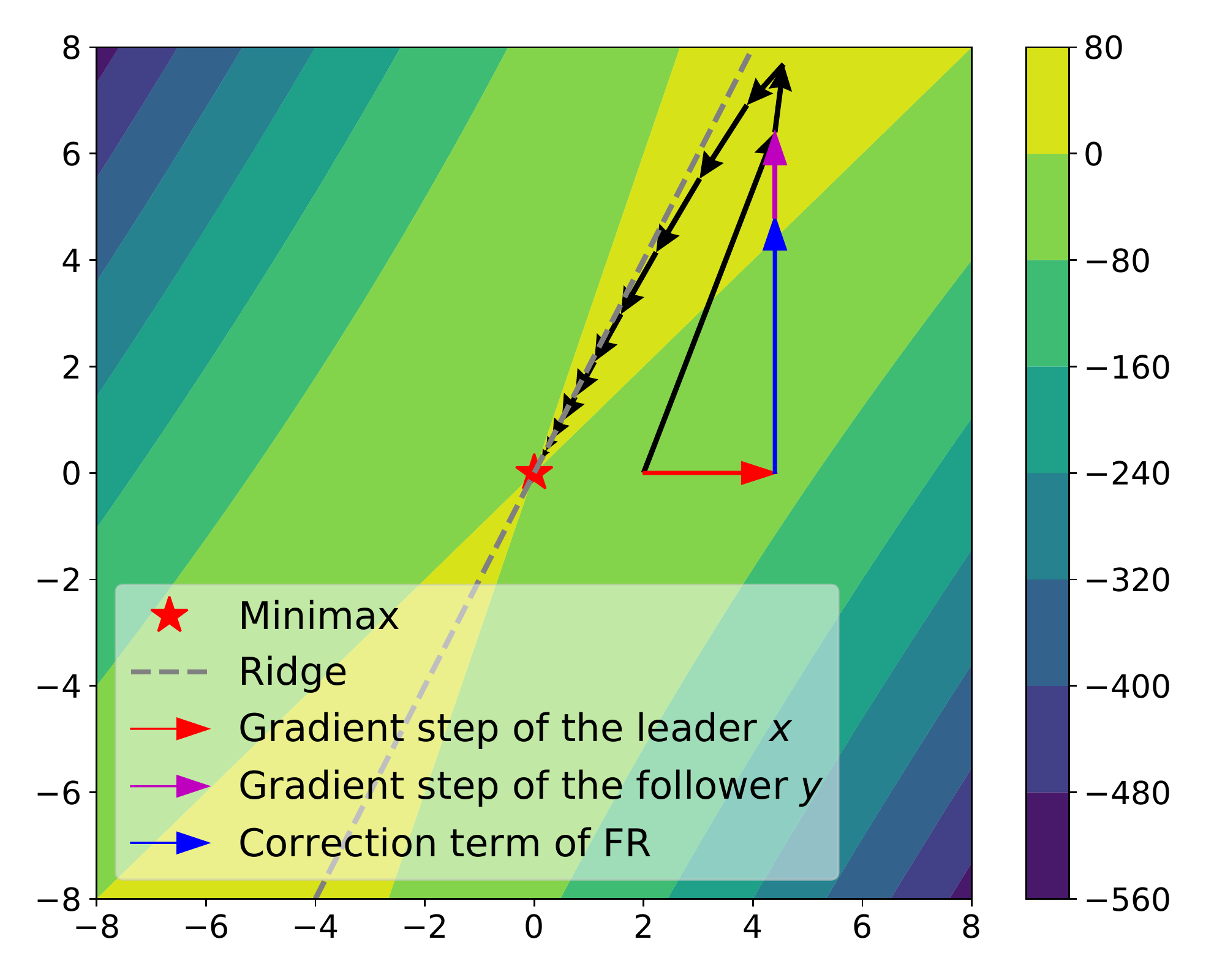}
    \vspace{-0.7cm}
	\caption{For a quadratic function $f(x, y) = -3 x^2 + 4 xy - y^2$, our algorithm moves closer to the ridge every iteration and it moves along the ridge once it hits the ridge. Without the FR correction term, gradient dynamics can drift away from the ridge.}
	\label{fig:fig1}
\end{wrapfigure}
We consider differentiable \emph{sequential} games with two players: a leader who can commit to an action, and a follower who responds after observing the leader's action. Particularly, we focus on the zero-sum case of this problem which is also known as minimax optimization, \ie,
\begin{equation*}
    \min_{\x\in\R^n}\max_{\y\in\R^m} f(\x,\y).
\end{equation*}
Unlike simultaneous games, many practical machine learning algorithms, including generative adversarial networks (GANs)~\citep{goodfellow2014generative, arjovsky2017wasserstein}, adversarial training~\citep{madry2017towards} and primal-dual reinforcement learning~\citep{du2017stochastic, dai2017sbeed}, explicitly specify the order of moves between players and the order of which player acts first is crucial for the problem. 
Therefore, the classical notion of local Nash equilibrium from simultaneous games may not be a proper definition of local optima for sequential games since minimax is in general not equal to maximin. Instead, we consider the notion of \emph{local minimax}~\citep{jin2019minmax} which takes into account the sequential structure of minimax optimization.

The vanilla algorithm for solving sequential minimax optimization is gradient descent-ascent (GDA), where both players take a gradient update simultaneously. However, GDA is known to suffer from two drawbacks. First, it has undesirable convergence properties: it fails to converge to some local minimax and can converge to fixed points that are not local minimax~\citep{jin2019minmax, daskalakis2018limit}. Second, GDA exhibits strong rotation around fixed points, which requires using very small learning rates~\citep{mescheder2017numerics,balduzzi2018mechanics} to converge. 

In this paper, we propose \emph{Follow-the-Ridge} (FR), an algorithm for minimax optimization that addresses both issues. Specifically, we elucidate the cause of undesirable convergence of GDA -- the leader whose gradient step takes the system away from the ridge. By adding a correction term to the follower, we explicitly cancel out negative effects of the leader's update. Intuitively, the combination of the leader's update and the correction term is parallel to the ridge in the landscape (see Fig.~\ref{fig:fig1}), hence the name \emph{Follow-the-Ridge}.
Overall, our contributions are the following: 
\vspace{-0.2cm}
\begin{itemize}[leftmargin=0.8cm]
    \setlength\itemsep{0.03em}
    \item We propose a novel algorithm for minimax optimization which has exact local convergence to local minimax points. Previously, this property was only known to be satisfied when the leader moves infinitely slower than the follower in gradient descent-ascent~\citep{jin2019minmax}.
    \item We show theoretically and empirically that FR addresses the notorious rotational behaviour of gradient dynamics around fixed points~\citep{balduzzi2018mechanics} and thus allows a much larger learning rate compared to GDA.
    \item We prove that our algorithm is compatible with standard acceleration techniques such as preconditioning and \emph{positive} momentum, which can speed up convergence significantly.
    \item We further show that our algorithm also applies to general-sum Stackelberg games~\citep{fiez2019convergence,zeuthen1935heinrich} with similar theoretical guarantees.
    \item Finally, we demonstrate empirically our algorithm improves the convergence performance in both toy minimax problems and GAN training compared to existing methods. 
\end{itemize}

\section{Preliminaries}
\vspace{-0.1cm}

\subsection{Minimax Optimization}
\vspace{-0.15cm}

We consider sequential games with two players where one player is deemed the \emph{leader} and the other the \emph{follower}. We denote leader's action by $\x\in\R^n$, and the follower's action by $\y\in\R^m$. The leader aims at minimizing the cost function $f(\x, \y)$ while the follower aims at maximizing $f(\x, \y)$. The only assumption we make on the cost function is the following.
\begin{assumption}
$f$ is twice differentiable everywhere, and thrice differentiable at critical points. $\nabla^2_{\y\y}f$ is invertible (\ie, non-singular).
\end{assumption}
The global solution to the sequential game $\min_{\x}\max_{\y} f(\x,\y)$ is an action pair $(\x^*,\y^*)$, such that $\y^*$ is the global optimal response to $\x^*$ for the follower, and that $\x^*$ is the global optimal action for the leader assuming the follower always play the global optimal response. We call this global solution the \emph{global minimax}. However, finding this global minimax is often intractable; %
therefore, we follow~\citet{jin2019minmax} and take \emph{local minimax} as the local surrogate. %

\begin{definition}[local minimax]
\label{def:localstack}
$(\x^*,\y^*)$ is a local minimax for $f(\x,\y)$ if (1) $\y^*$ is a local maximum of $f(\x^*,\cdot)$; (2) $\x^*$ is a local minimum of $\phi(\x):=f(\x,r(\x))$, where $r(\x)$ is the implicit function defined by $\nabla_{\y}f(\x,\y)=0$ in a neighborhood of $\x^*$ with $r(\x^*)=\y^*$.
\end{definition}
In the definition above, the implicit function $r(\cdot): \R^n \rightarrow \R^m$ is a local best response for the follower, and is a \emph{ridge} in the landscape of $f(\x,\y)$. 
Local minimaxity captures an equilibrium in a two-player sequential game if both players are only allowed to change their strategies locally. For notational convenience, we define
\begin{equation*}
    \nabla f(\x,\y)=\left[
    \nabla_{\x}f,
    \nabla_{\y}f
    \right]^\top,\;
    \nabla^2f(\x,\y)=\left[\begin{matrix}
    \hessian_{\x\x} & \hessian_{\x\y}\\
    \hessian_{\y\x} & \hessian_{\y\y}
    \end{matrix}\right].
\end{equation*}
In principle, local minimax can be characterized in terms of the following first-order and second-order conditions, which were established in~\citet{jin2019minmax}.
\begin{prop}[First-order Condition]
Any local minimax $(\x^*,\y^*)$ satisfies $\nabla f(\x^*,\y^*)=0$.
\end{prop}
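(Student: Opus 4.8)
The plan is to prove the two blocks of $\nabla f(\x^*,\y^*)$ vanish separately, namely $\nabla_{\y}f(\x^*,\y^*)=0$ and $\nabla_{\x}f(\x^*,\y^*)=0$, each extracted from one clause of the definition of local minimax. The $\y$-block is immediate: condition (1) states that $\y^*$ is a local maximum of the map $\y\mapsto f(\x^*,\y)$, so the ordinary first-order necessary condition for an unconstrained local extremum of a differentiable function gives $\nabla_{\y}f(\x^*,\y^*)=0$.

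For the $\x$-block I would use condition (2) together with an envelope/chain-rule argument. First I would invoke Assumption 1: since $f$ is twice differentiable and $\nabla^2_{\y\y}f$ is invertible at $(\x^*,\y^*)$, the implicit function theorem guarantees that the best-response map $r(\cdot)$ appearing in Definition~\ref{def:localstack} exists and is continuously differentiable in a neighborhood of $\x^*$, with $r(\x^*)=\y^*$ and $\nabla_{\y}f(\x,r(\x))\equiv 0$ throughout that neighborhood. I would then differentiate $\phi(\x)=f(\x,r(\x))$ by the chain rule to obtain
\[
\nabla\phi(\x) = \nabla_{\x}f(\x,r(\x)) + \nabla r(\x)^\top \nabla_{\y}f(\x,r(\x)).
\]
Evaluating at $\x^*$, the second term drops out because $\nabla_{\y}f(\x^*,r(\x^*))=\nabla_{\y}f(\x^*,\y^*)=0$ (either by the defining identity of $r$ or by the $\y$-block just established), leaving $\nabla\phi(\x^*)=\nabla_{\x}f(\x^*,\y^*)$. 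Since condition (2) says $\x^*$ is a local minimum of $\phi$, the first-order necessary condition gives $\nabla\phi(\x^*)=0$, hence $\nabla_{\x}f(\x^*,\y^*)=0$. Combining the two blocks yields $\nabla f(\x^*,\y^*)=0$.

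I expect no serious obstacle here: this is a first-order necessary-condition statement whose only nontrivial ingredient is the existence and differentiability of the ridge $r$, which is exactly what the invertibility of $\nabla^2_{\y\y}f$ in Assumption 1 supplies via the implicit function theorem. The one point warranting care is that the vanishing of the cross term in $\nabla\phi$ hinges on $\nabla_{\y}f$ being zero along the ridge; this is built into the definition of $r$, so the $\x$-block actually goes through independently of the $\y$-block, but it is cleanest to record the $\y$-block first and reuse it.
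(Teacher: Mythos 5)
Your proof is correct. Note, however, that the paper contains no proof of this proposition to compare against: it is stated as established in \citet{jin2019minmax}, and under the paper's restated Definition~\ref{def:localstack} your envelope/chain-rule argument is the natural and direct one. Two brief observations. First, your closing remark about independence is right for a structural reason worth making explicit: condition (2) already presupposes the existence of $r$ with $r(\x^*)=\y^*$ and $\nabla_{\y}f(\x,r(\x))\equiv 0$, so evaluating this identity at $\x^*$ yields $\nabla_{\y}f(\x^*,\y^*)=0$ without invoking condition (1) at all; conversely, if you instead want to \emph{construct} $r$ via the implicit function theorem (as you do), you need a base point on the zero set of $\nabla_{\y}f$, and that is exactly what condition (1) supplies --- so your chosen ordering is not merely cosmetic but what licenses the IFT step. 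Second, be aware that your route leans on the invertibility of $\hessian_{\y\y}$ (Assumption 1), whereas the original first-order condition in \citet{jin2019minmax} does not: their definition of local minimax is purely local (via a tolerance function on shrinking neighborhoods, with no nondegeneracy assumption), and their proof of $\nabla_{\x}f(\x^*,\y^*)=0$ is a more delicate perturbation argument on $\max_{\y'}f(\x,\y')$ over shrinking balls rather than differentiation along a smooth ridge. Their argument is therefore more general; yours is simpler and is exactly adapted to the definition this paper actually uses. (A pedantic caveat: Assumption 1 gives twice differentiability rather than $C^2$, while the classical IFT asks for $\nabla_{\y}f\in C^1$; since Definition~\ref{def:localstack} stipulates the existence of $r$ outright, nothing in your proof hinges on this.)
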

\begin{prop}[Second-order Necessary Condition]\label{prop:sec-necessary}
Any local minimax $(\x^*,\y^*)$ satisfies $\hessian_{\y\y}\preccurlyeq \zero$ and $\hessian_{\x\x}-\hessian_{\x\y} \hessian_{\y\y}^{-1} \hessian_{\y\x}\succcurlyeq \zero$.
\end{prop}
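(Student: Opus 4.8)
The plan is to derive the two claimed inequalities separately, in each case reducing to the classical second-order necessary condition of an ordinary unconstrained local extremum applied to one of the two optimality requirements in Definition~\ref{def:localstack}.

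For the first inequality I would simply invoke condition (1). Since $\y^*$ is a local maximum of the map $\y\mapsto f(\x^*,\y)$, the standard second-order necessary condition for a local maximum states that its Hessian at $\y^*$, namely $\hessian_{\y\y}$, is negative semidefinite, giving $\hessian_{\y\y}\preccurlyeq\zero$ directly.

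The substance is in the second inequality, where I would compute the Hessian of $\phi(\x)=f(\x,r(\x))$ at $\x^*$ and recognize it as the Schur complement. First I would observe that Assumption 1 (invertibility of $\hessian_{\y\y}$ together with $f\in C^2$) lets the implicit function theorem guarantee that $r$ is well-defined and continuously differentiable near $\x^*$, so that $\phi$ is $C^2$ there. Differentiating the defining identity $\nabla_\y f(\x,r(\x))=0$ with respect to $\x$ gives $\hessian_{\y\x}+\hessian_{\y\y}\,\nabla r(\x)=0$, hence the Jacobian $\nabla r(\x^*)=-\hessian_{\y\y}^{-1}\hessian_{\y\x}$. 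Next, because $\nabla_\y f(\x,r(\x))=0$ holds identically, the chain rule collapses the gradient of $\phi$ to $\nabla\phi(\x)=\nabla_\x f(\x,r(\x))$. Differentiating once more and substituting the expression for $\nabla r(\x^*)$ yields $\nabla^2\phi(\x^*)=\hessian_{\x\x}+\hessian_{\x\y}\nabla r(\x^*)=\hessian_{\x\x}-\hessian_{\x\y}\hessian_{\y\y}^{-1}\hessian_{\y\x}$. Finally, since $\x^*$ is a local minimum of $\phi$ by condition (2), the second-order necessary condition forces $\nabla^2\phi(\x^*)\succcurlyeq\zero$, which is exactly the claimed Schur-complement inequality.

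I expect the only delicate point to be the bookkeeping in the implicit differentiation: keeping the block dimensions straight (with $\hessian_{\y\x}$ of size $m\times n$, $\nabla r$ of size $m\times n$, and so on) and correctly invoking the implicit function theorem to justify that $r$, and hence $\phi$, are twice differentiable at $\x^*$. There is no genuine conceptual obstacle here; once the reduction to the two scalar optimality conditions is made, the computation is essentially an application of the chain rule and the symmetry $\hessian_{\y\x}=\hessian_{\x\y}^\top$.
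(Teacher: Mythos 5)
Your proof is correct and follows essentially the same route the paper relies on: the paper defers this proposition to \citet{jin2019minmax}, but the identical implicit-differentiation computation — $\nabla r(\x^*)=-\hessian_{\y\y}^{-1}\hessian_{\y\x}$, $\nabla\phi(\x)=\nabla_{\x}f(\x,r(\x))$ since $\nabla_{\y}f$ vanishes on the ridge, hence $\nabla^2\phi(\x^*)=\hessian_{\x\x}-\hessian_{\x\y}\hessian_{\y\y}^{-1}\hessian_{\y\x}\succcurlyeq\zero$ at a local minimum of $\phi$ — is exactly what the paper carries out in its appendix for the general-sum Stackelberg case (Section~\ref{sec:nonzerosum}), of which your argument is the zero-sum specialization $g=-f$ where the extra correction terms drop out. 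The first inequality is, as you say, immediate from $\y^*$ being a local maximum of $f(\x^*,\cdot)$, so there is no gap.
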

\begin{prop}[Second-order Sufficient Condition]\label{prop:sec-sufficient}
Any stationary point $(\x^*,\y^*)$ satisfying $\hessian_{\y\y}\prec \zero$ and $\hessian_{\x\x}-\hessian_{\x\y} \hessian_{\y\y}^{-1} \hessian_{\y\x}\succ \zero$ is a local minimax.
\end{prop}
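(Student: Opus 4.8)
The plan is to verify the two clauses of Definition~\ref{def:localstack} directly, using stationarity $\nabla f(\x^*,\y^*)=\zero$ together with the two definiteness hypotheses. The first clause is essentially free: at $(\x^*,\y^*)$ we have $\nabla_\y f(\x^*,\y^*)=\zero$ and $\nabla^2_{\y\y}f(\x^*,\y^*)=\hessian_{\y\y}\prec\zero$, so the standard second-order sufficient condition for an unconstrained maximum (vanishing gradient plus negative-definite Hessian) shows that $\y^*$ is a strict local maximum of $f(\x^*,\cdot)$.

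For the second clause I would first produce the ridge $r(\cdot)$ and only then differentiate $\phi$. Since $\hessian_{\y\y}\prec\zero$ is in particular nonsingular and $\nabla_\y f(\x^*,\y^*)=\zero$, applying the implicit function theorem to the equation $\nabla_\y f(\x,\y)=\zero$ yields a $C^1$ map $r$ on a neighborhood of $\x^*$ with $r(\x^*)=\y^*$ satisfying the identity $\nabla_\y f(\x,r(\x))\equiv\zero$, and with Jacobian $Dr(\x)=-\hessian_{\y\y}^{-1}\hessian_{\y\x}$ along the ridge. Thus $\phi(\x)=f(\x,r(\x))$ is well defined near $\x^*$.

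The core of the argument is the gradient and Hessian of $\phi$. By the chain rule $\nabla\phi(\x)=\nabla_\x f(\x,r(\x))+Dr(\x)^\top\nabla_\y f(\x,r(\x))$, and the second term vanishes identically because of the ridge identity $\nabla_\y f(\x,r(\x))\equiv\zero$; hence $\nabla\phi(\x)=\nabla_\x f(\x,r(\x))$, and in particular $\nabla\phi(\x^*)=\nabla_\x f(\x^*,\y^*)=\zero$, so $\x^*$ is a critical point of $\phi$. Differentiating once more and substituting $Dr(\x^*)=-\hessian_{\y\y}^{-1}\hessian_{\y\x}$ gives $\nabla^2\phi(\x^*)=\hessian_{\x\x}-\hessian_{\x\y}\hessian_{\y\y}^{-1}\hessian_{\y\x}$, which is precisely the Schur complement assumed to be $\succ\zero$. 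A critical point with positive-definite Hessian is a strict local minimum (by the second-order Taylor expansion of $\phi$ at $\x^*$), so $\x^*$ is a local minimum of $\phi$, and both clauses of the definition hold.

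The step needing the most care is the regularity bookkeeping rather than any hard estimate. To form $\nabla^2\phi(\x^*)$ I differentiate $\nabla_\x f(\x,r(\x))$, which requires $\nabla_\x f$ to be differentiable and $r$ to be $C^1$; twice-differentiability of $f$ supplies the former (and the $C^1$ regularity of $\nabla_\y f$ needed to invoke the implicit function theorem), while the theorem itself supplies the latter, so $\phi$ is twice differentiable at $\x^*$ and the Taylor argument is legitimate. I expect the only genuinely delicate point to be exploiting the vanishing cross term at the level of the \emph{identity} $\nabla_\y f(\x,r(\x))\equiv\zero$ (so that it may be differentiated), rather than only at the single point $\x^*$; with that in hand the remainder is a routine combination of the implicit function theorem and the Schur-complement identity.
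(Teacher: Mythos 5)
Your proof is correct and is essentially the paper's own argument: the paper states this proposition without proof, importing it from \citet{jin2019minmax}, and the standard argument is exactly your direct verification of Definition~\ref{def:localstack} --- the implicit function theorem (using $\hessian_{\y\y}\prec\zero$, hence nonsingular) produces the ridge $r$ with $Dr(\x^*)=-\hessian_{\y\y}^{-1}\hessian_{\y\x}$, the cross term in $\nabla\phi$ vanishes identically along the ridge, and $\nabla^2\phi(\x^*)$ is the Schur complement $\hessian_{\x\x}-\hessian_{\x\y}\hessian_{\y\y}^{-1}\hessian_{\y\x}\succ\zero$, giving a strict local minimum at a critical point. The only point worth flagging is that the classical implicit function theorem requires $\nabla_{\y}f$ to be continuously differentiable (i.e.\ $f\in C^2$), marginally stronger than the paper's ``twice differentiable everywhere,'' but since Definition~\ref{def:localstack} itself presupposes that the implicit function $r$ exists near $\x^*$, this regularity bookkeeping creates no gap in context.
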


\begin{wrapfigure}[6]{R}{0.38\textwidth}
	\centering
    \vspace{-0.8cm}
    \includegraphics[width=0.34\textwidth]{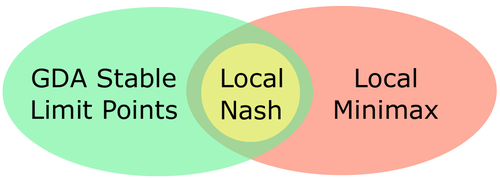}
    \vspace{-0.2cm}
	\caption{Relation between local Nash, local minimax and GDA stable fixed points.}
	\label{fig:venn}
\end{wrapfigure}
The concept of global/local minimax is different from \emph{Nash equilibrium} and \emph{local Nash}, which are the equilibrium concepts typically studied for simultaneous games (see~\citet{nash1950equilibrium,ratliff2016on} for more details).
In particular, we note that the concept of Nash equilibrium or local Nash does not reflect the order between the min-player and the max-player and may not exist even for simple functions~\citep{jin2019minmax}.
In general, the set of local minimax is a superset of local Nash. Under some mild assumptions, local minimax points are guaranteed to exist~\citep{jin2019minmax}. 
However, the set of stable fixed points of GDA, roughly speaking the set of points that GDA locally converges to, is a different superset of local Nash~\citep{jin2019minmax}. The relation between the three sets of points is illustrated in Fig.~\ref{fig:venn}. 

\subsection{Stability of discrete dynamical systems}
\vspace{-0.15cm}
Gradient-based methods can reliably find local stable fixed points -- local minima in single-objective optimization. Here, we generalize the concept of stability to games by taking game dynamics as a discrete dynamical system.
An iteration of the form $\z_{t+1}=w(\z_t)$ can be viewed as a discrete dynamical system, where in our case $w:\R^{n+m}\to\R^{n+m}$. If $w(\z)=\z$, then $\z$ is called a fixed point. We study the stability of fixed points as a proxy to local convergence of game dynamics.
\begin{definition}\label{def:stable-fixed-points}
Let $\jacobian$ denote the Jacobian of $w$ at a fixed point $\z$. If it has spectral radius $\rho(\jacobian)\le 1$, then we call $\z$ a stable fixed point. If $\rho(\jacobian)<1$, then we call $\z$ a strictly stable fixed point.
\end{definition}
It is known that strict stability implies local convergence (\eg, see~\citet{galor2007discrete}). In other words, if $\z$ is a strictly stable fixed point, there exists a neighborhood $U$ of $\z$ such that when initialized in $U$, the iteration steps always converge to $\z$.

\section{Undesirable Behaviours of GDA}\label{sec:GDA-fail-case}
\vspace{-0.1cm}
In this section, we discuss the undesirable behaviours of GDA in more detail. 
Recall that the update rule of GDA is given by
\begin{equation}
\begin{aligned}
    \x_{t+1} &\gets \x_t-\eta \nabla_{\x}f, \\
    \y_{t+1} &\gets \y_t+\eta \nabla_{\y}f,
\end{aligned} \;
\end{equation}
where we assume the same learning rate for both the leader and the follower for simplicity\footnote{In general, the learning rates of two players can be different. Since our arguments apply to general setting as long as the ratio $\eta_\x / \eta_\y$ is a positive constant, so we assume the same learning rate for convenience.}. 
As illustrated in Fig.~\ref{fig:venn}, the set of stable fixed points of GDA can include points that are not local minimax and, perhaps even worse, some local minimax are not necessarily stable fixed points of GDA. Here, we first give an example that a stable fixed point of GDA is not a local minimax. Consider
$\min_{x} \max_{y} f(x,y) = 3x^2 + y^2 + 4xy$;
the only stationary point of this problem is $(0, 0)$ and the Jacobian of GDA at this point is
\begin{equation*}
    \jacobian = \iden - \eta
    \left[\begin{matrix}
    6 &  4 \\
    - 4 &  -2
    \end{matrix}\right].
\end{equation*}
It is easy to see that the eigenvalues of $\jacobian$ are $e_1 = e_2 = 1 - 2\eta$. Therefore, by Definition~\ref{def:stable-fixed-points}, $(0, 0)$ is a strictly stable fixed point of GDA. However, one can show that $\hessian_{\y\y} = 2 > 0$ which doesn't satisfy the second-order necessary condition of local minimax. 

Similarly, one can easily find examples in which a local minimax is not in the set of stable fixed points of GDA, \eg, $\min_{x \in \R} \max_{y \in \R} f(x,y) = -3x^2 - y^2 + 4xy$ (see Fig.~\ref{fig:fig1}). In this example, the two Jacobian eigenvalues are both greater than $1$ no matter how small the learning rate is. In other words, GDA fails to converge to $(0,0)$ for almost all initializations~\citep{daskalakis2018limit}.

As we will discuss in the next section, the main culprit of the undesirable behaviours of GDA is the leader whose gradient update $-\eta \nabla_\x f$ pushes the whole system away from the ridge or attracts the system to non-local-minimax points. By contrast, the follower's step $\eta \nabla_\y f$ can pull the system closer to the ridge (see Fig.~\ref{fig:fig1}) or push it away from bad fixed points. 
To guarantee convergence to local minimax (or avoid bad fixed points), we have to use a very small learning rate for the leader~\citep{jin2019minmax, fiez2019convergence} so that the $\eta \nabla_\y f$ term dominates. In the next section, we offer an alternative approach which explicitly cancels out undesirable effects of $-\eta \nabla_\x f$, thereby allowing us to use larger learning rates for the leader. %

\section{Follow the Ridge}
\vspace{-0.1cm}
Despite its popularity, GDA has the tendency to drift away from the ridge or the implicit function, and can, therefore, fail to converge with any constant learning rate. To address these problems, we propose a novel algorithm for minimax optimization, which we term \emph{Follow-the-Ridge} (FR). The algorithm modifies gradient descent-ascent by applying an asymmetric preconditioner. The update rule is described in Algorithm.~\ref{alg:DEMAB}.
\begin{algorithm}[h]
\caption{Follow-the-Ridge (FR). Differences from gradient descent-ascent are shown in {\color{mydarkblue}blue}.}\label{alg:DEMAB}
\begin{algorithmic}[1]
    \Require{Learning rate $\eta_\x$ and $\eta_\y$; number of iterations $T$.} 
    \For{$t = 1, ..., T$}
        \State $\x_{t+1}\gets \x_t-\eta_{\x}\nabla_{\x}f(\x_t,\y_t)$ \Comment{gradient descent}
        \State $\y_{t+1}\gets \y_t+\eta_{\y} \nabla_{\y} f(\x_t,\y_t)+{\color{mydarkblue}{\eta_{\x} \hessian_{\y\y}^{-1}\hessian_{\y\x}\nabla_{\x}f(\x_t,\y_t)}}$ \Comment{modified gradient ascent}
    \EndFor
\end{algorithmic}
\end{algorithm}

The main intuition behind FR is the following. Suppose that $\y_t$ is a local maximum of $f(\x_t,\cdot)$. Let $r(\x)$ be the implicit function defined by $\nabla_{\y}f(\x,\y)=0$ around $(\x_t,\y_t)$, \ie, a ridge in the landscape of $f(\x,\y)$. By definition, a local minimax has to lie on a ridge; hence, it is intuitive to follow the ridge during learning. However, if $(\x_t, \y_t)$ is on the ridge, then $\nabla_{\y}f(\x_t,\y_t)=0$, and one step of gradient descent-ascent will take $(\x_{t},\y_t)$ to $(\x_{t}-\eta_{\x} \nabla_{\x}f,\y_{t})$, which is off the ridge. In other words, gradient descent-ascent tends to drift away from the ridge. The correction term we introduce is 
\begin{equation*}
\nabla_{\x} r(\x)\left(-\eta_{\x}\nabla_{\x}f(\x_t,\y_t)\right)=\eta_{\x}\hessian_{\y\y}^{-1}\hessian_{\y\x}\nabla_{\x}f.
\end{equation*}
It would bring $\y_t$ to $\y_t+\nabla_{\x}r(\x)(\x_{t+1}-\x_t)\approx r(\x_{t+1})$, thereby encouraging both players to stay along the ridge. When $(\x_t,\y_t)$ is not on a ridge yet, we expect the $-\eta_{\x}\nabla_{\x}f$ term and the $\eta_{\x}\hessian_{\y\y}^{-1}\hessian_{\y\x}\nabla_{\x}f$ term to move parallel to the ridge, while the $\eta_{\y}\nabla_{\y}f$ term brings $(\x_t,\y_t)$ closer to the ridge (see Fig.~\ref{fig:fig1}).
Our main theoretical result is the following theorem, which suggests that FR locally converges and only converges to local minimax. 
\begin{theorem}[Exact local convergence]
\label{thm:main}
With a suitable learning rate, all strictly stable fixed points of FR are local minimax, and all local minimax points are stable fixed points of FR.
\end{theorem}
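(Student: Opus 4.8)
The plan is to reduce the entire statement to a spectral analysis of the Jacobian $\jacobian$ of the FR update map $w$ at its fixed points, exploiting the fact that FR is exactly gradient descent-ascent preconditioned by an \emph{asymmetric} matrix. First I would pin down the fixed points: setting $\x_{t+1}=\x_t$ forces $\nabla_{\x}f=\zero$, and substituting this into the $\y$-update forces $\eta_{\y}\nabla_{\y}f=\zero$, so the fixed points of FR are precisely the stationary points $\nabla f=\zero$, matching the first-order condition for local minimax. Next I would rewrite the FR step compactly as $\z_{t+1}=\z_t-\mathbf{P}\mathbf{D}\,\nabla f$, where $\mathbf{D}=\mathrm{diag}(\eta_{\x}\iden,-\eta_{\y}\iden)$ carries the descent-ascent signs and $\mathbf{P}=\left[\begin{smallmatrix}\iden&\zero\\-\hessian_{\y\y}^{-1}\hessian_{\y\x}&\iden\end{smallmatrix}\right]$ is the ridge-correction preconditioner; multiplying out recovers exactly the blue correction term in Algorithm~\ref{alg:DEMAB}.

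The key algebraic step is to diagonalize the spectrum. At a fixed point $\nabla f=\zero$, differentiating $\mathbf{P}\mathbf{D}\,\nabla f$ by the product rule makes every term that hits the $(\x,\y)$-dependence of $\mathbf{P}$ get multiplied by $\nabla f=\zero$ and vanish (the thrice-differentiability at critical points granted by our assumption on $f$ is what lets me assert these derivatives exist), leaving the clean expression $\jacobian=\iden-\mathbf{P}\mathbf{D}\,\nabla^2 f$; concretely the correction term contributes only through $\eta_{\x}\hessian_{\y\y}^{-1}\hessian_{\y\x}$ times the Jacobian of $\nabla_{\x}f$. Since $\mathbf{P}$ is invertible, $\mathbf{P}\mathbf{D}\,\nabla^2 f$ is similar to $\mathbf{D}\,\nabla^2 f\,\mathbf{P}$, and I would compute this product and observe it is block upper-triangular: its off-diagonal $(\y,\x)$-block cancels as $-\eta_{\y}\hessian_{\y\x}+\eta_{\y}\hessian_{\y\x}=\zero$, and its diagonal blocks are $\eta_{\x}(\hessian_{\x\x}-\hessian_{\x\y}\hessian_{\y\y}^{-1}\hessian_{\y\x})$ and $-\eta_{\y}\hessian_{\y\y}$. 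The top-left block is $\eta_{\x}$ times the Schur complement, which via the implicit-function identity $\nabla_{\x} r(\x^*)=-\hessian_{\y\y}^{-1}\hessian_{\y\x}$ and the envelope computation equals $\eta_{\x}\nabla^2\phi(\x^*)$. Both diagonal blocks are symmetric, so all eigenvalues of $\jacobian$ are \emph{real} and split into $\{1-\eta_{\x}\lambda_i\}$ for eigenvalues $\lambda_i$ of $\nabla^2\phi$, together with $\{1+\eta_{\y}\mu_j\}$ for eigenvalues $\mu_j$ of $\hessian_{\y\y}$; this realness is precisely what eliminates the rotational behaviour of GDA.

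Finally I would translate stability into the second-order conditions, handling both directions. For the forward direction, $\rho(\jacobian)<1$ means $|1-\eta_{\x}\lambda_i|<1$ and $|1+\eta_{\y}\mu_j|<1$ for all $i,j$, which for positive $\eta_{\x},\eta_{\y}$ force $\lambda_i>0$ and $\mu_j<0$, i.e.\ $\nabla^2\phi\succ\zero$ and $\hessian_{\y\y}\prec\zero$; Proposition~\ref{prop:sec-sufficient} then certifies local minimaxity. For the converse, a local minimax satisfies the necessary conditions $\nabla^2\phi\succcurlyeq\zero$ and $\hessian_{\y\y}\preccurlyeq\zero$ (Proposition~\ref{prop:sec-necessary}), so $\lambda_i\ge 0$ and $\mu_j\le 0$; choosing $\eta_{\x}\le 2/\lambda_{\max}(\nabla^2\phi)$ and $\eta_{\y}\le 2/\rho(\hessian_{\y\y})$ drives every eigenvalue modulus to $\le 1$, whence $\rho(\jacobian)\le 1$ and the point is a stable fixed point — only stable rather than strictly stable when $\nabla^2\phi$ is singular, exactly as the statement asserts. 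The main obstacle I anticipate is the Jacobian bookkeeping, namely justifying rigorously that at the fixed point the derivative of the correction term contributes nothing beyond $\eta_{\x}\hessian_{\y\y}^{-1}\hessian_{\y\x}$ times the Jacobian of $\nabla_{\x}f$, and then executing the similarity transform so that the block-triangular form drops out cleanly; once that form is in hand, the spectral-radius conditions mirror the second-order conditions almost verbatim.
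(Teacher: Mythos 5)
Your proposal is correct and follows essentially the same route as the paper's proof: fixed points are exactly the stationary points because the ridge preconditioner is invertible, the Jacobian at a fixed point (your product-rule argument for dropping the derivative of $\mathbf{P}$, which the paper leaves implicit, is valid) is conjugated by $\mathbf{P}^{-1}$ into a block upper-triangular matrix with symmetric diagonal blocks $\eta_{\x}\left(\hessian_{\x\x}-\hessian_{\x\y}\hessian_{\y\y}^{-1}\hessian_{\y\x}\right)$ and $-\eta_{\y}\hessian_{\y\y}$, and the resulting real spectrum is matched against Propositions~\ref{prop:sec-necessary} and~\ref{prop:sec-sufficient} exactly as in the paper, including the learning-rate threshold for the converse direction. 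Your extra identification of the top-left block with $\nabla^2\phi(\x^*)$ is a harmless restatement of the Schur complement and changes nothing of substance.
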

The proof is mainly based on the following observation. The Jacobian of FR dynamics at a fixed point $(\x^*,\y^*)$ is ($c:=\eta_{\y}/\eta_{\x}$)
\begin{equation*}
\jacobian=\iden-\eta_{\x}\left[\begin{matrix}
\iden&\\
-\hessian_{\y\y}^{-1}\hessian_{\y\x}& \iden
\end{matrix}\right]\left[\begin{matrix}
\hessian_{\x\x}& \hessian_{\x\y} \\ -c \hessian_{\y\x} & -c \hessian_{\y\y}
\end{matrix}\right],
\end{equation*}
where the Hessians are evaluated at $(\x^*,\y^*)$. $\jacobian$ is similar to 
\begin{equation*}
\mathbf{M}=\left[\begin{matrix}
\iden&\\
\hessian_{\y\y}^{-1} \hessian_{\y\x}& \iden
\end{matrix}\right] \jacobian \left[\begin{matrix}
\iden&\\
-\hessian_{\y\y}^{-1} \hessian_{\y\x}& \iden
\end{matrix}\right]=\iden-\eta_{\x}\left[\begin{matrix}
\hessian_{\x\x}-\hessian_{\x\y}\hessian_{\y\y}^{-1}\hessian_{\y\x}& \hessian_{\x\y} \\  & -c \hessian_{\y\y}
\end{matrix}\right].
\end{equation*}
Therefore, the eigenvalues of $\jacobian$ are those of $\iden+\eta_{\y}\hessian_{\y\y}$ and those of $\iden-\eta_{\x}(\hessian_{\x\x}-\hessian_{\x\y}\hessian_{\y\y}^{-1}\hessian_{\y\x})$. 
As shown in second-order necessary condition~\ref{prop:sec-necessary}, $(\x^*,\y^*)$ being a local minimax implies $\hessian_{\y\y}\preccurlyeq \zero$ and $\hessian_{\x\x}-\hessian_{\x\y} \hessian_{\y\y}^{-1} \hessian_{\y\x}\succcurlyeq \zero$; one can then show that the spectral radius of the Jacobian satisfies $\rho(\jacobian)\le 1$; hence $(\x^*,\y^*)$ is a stable fixed point by Definition~\ref{def:stable-fixed-points}. On the other hand, when $\rho(\jacobian)<1$, by the sufficient condition in Proposition~\ref{prop:sec-sufficient}, $(\x^*,\y^*)$ must be a local minimax.

\begin{rem}[All eigenvalues are real]
We notice that all eigenvalues of $\jacobian$, the Jacobian of FR, are real since both $\hessian_{\y\y}$ and $\hessian_{\x\x}-\hessian_{\x\y} \hessian_{\y\y}^{-1} \hessian_{\y\x}$ are symmetric matrices. As noted by~\citet{mescheder2017numerics, negmomentum2018, balduzzi2018mechanics}, the rotational behaviour (instability) of GDA is caused by eigenvalues with large imaginary part. Therefore, FR addresses the strong rotation problem around fixed points as all eigenvalues are real.
\end{rem}

\subsection{Accelerating Convergence with Preconditioning and Momentum}\label{sec:extension}
\vspace{-0.15cm}
We now discuss several extension of FR that preserves the theoretical guarantees.

\textbf{Preconditioning:} To speed up the convergence, it is often desirable to apply a preconditioner on the gradients that compensates for the curvature. For FR, the preconditioned variant is given by
\begin{align}
\label{equ:precond}
\left[\begin{matrix}
\x_{t+1}\\ \y_{t+1}
\end{matrix}\right]\gets \left[\begin{matrix}
\x_{t}\\ \y_{t}
\end{matrix}\right]- \left[\begin{matrix}
\iden & \\ -\hessian_{\y\y}^{-1}\hessian_{\y\x} & \iden
\end{matrix}\right]\left[\begin{matrix}
\eta_{\x}\mathbf{P}_1\nabla_{\x}f\\ -\eta_{\y}\mathbf{P}_2\nabla_{\y}f
\end{matrix}\right]
\end{align}
We can show that with \emph{any} constant positive definite preconditioners $\mathbf{P}_1$ and $\mathbf{P}_2$, the local convergence behavior of Algorithm~\ref{alg:DEMAB} remains exact. 
We note that preconditioning is crucial for successfully training GANs (see Fig.~\ref{fig:gan-pred}) and RMSprop/Adam has been exclusively used in GAN training. 

\textbf{Momentum:} Another important technique in optimization is momentum, which speeds up convergence significantly both in theory and in practice~\citep{polyak1964some, sutskever2013on}. We show that momentum can be incorporated into FR (here, we include momentum outside the correction term which is equivalent to applying momentum to the gradient directly for simplicity. We give a detailed discussion in Appendix~\ref{app:details}), which gives the following update rule:
\begin{equation}
\begin{aligned}
\label{equ:momentum}
\left[\begin{matrix}
\x_{t+1}\\ \y_{t+1}
\end{matrix}\right]\gets \left[\begin{matrix}
\x_{t}\\ \y_{t}
\end{matrix}\right]- \left[\begin{matrix}
\iden & \\ -\hessian_{\y\y}^{-1}\hessian_{\y\x} & \iden
\end{matrix}\right]\left[\begin{matrix}
\eta_{\x}\nabla_{\x}f\\ -\eta_{\y}\nabla_{\y}f
\end{matrix}\right]
+\gamma \left[\begin{matrix}
\x_{t}-\x_{t-1}\\ \y_{t}-\y_{t-1}
\end{matrix}\right].
\end{aligned}
\end{equation}
Because all of the Jacobian eigenvalues are real, we can show that momentum speeds up local convergence in a similar way it speeds up single objective minimization.
\begin{theorem}
For local minimax $(\x^*, \y^*)$, let $\alpha=\min\left\{\lambda_{\text{min}}(-\hessian_{\y\y}), \lambda_{\text{min}}(\hessian_{\x\x}-\hessian_{\x\y}\hessian_{\y\y}^{-1}\hessian_{\y\x})\right\}$, $\beta=\rho\left(\nabla^2f(\x^*,\y^*)\right)$, $\kappa:=\beta/\alpha$. Then FR converges asymptotically to $(\x^*,\y^*)$ with a rate $\Omega(\kappa^{-2})$; FR with a momentum parameter of $\gamma=1-\Theta\left(\kappa^{-1}\right)$ converges asymptotically with a rate $\Omega(\kappa^{-1})$.\footnote{By a rate $a$, we mean that one iteration shortens the distance toward the fixed point by a factor of $(1-a)$; hence the larger the better.}
\end{theorem}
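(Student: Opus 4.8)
The plan is to linearize the FR dynamics about the local minimax $(\x^*,\y^*)$ and read the asymptotic rate off the spectral radius of the resulting iteration matrix. Since strict stability governs local convergence (Definition~\ref{def:stable-fixed-points}) and the asymptotic geometric rate of any linear iteration $\delta_{t+1}=\mathbf{B}\delta_t$ equals $1-\rho(\mathbf{B})$ (because $\lim_t\|\mathbf{B}^t\|^{1/t}=\rho(\mathbf{B})$, with at most polynomial prefactors from Jordan blocks), it suffices to control $\rho$. For plain FR I reuse the similarity transform already displayed before Theorem~\ref{thm:main}: conjugating $\jacobian$ by $\mathbf{T}=\left[\begin{smallmatrix}\iden&\\ \hessian_{\y\y}^{-1}\hessian_{\y\x}&\iden\end{smallmatrix}\right]$ gives the block-upper-triangular $\mathbf{M}=\iden-\eta_{\x}\tilde{\mathbf A}$, where $\tilde{\mathbf A}=\left[\begin{smallmatrix}\mathbf{S}&\hessian_{\x\y}\\ &-c\hessian_{\y\y}\end{smallmatrix}\right]$ and $\mathbf{S}:=\hessian_{\x\x}-\hessian_{\x\y}\hessian_{\y\y}^{-1}\hessian_{\y\x}$. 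As $\tilde{\mathbf A}$ is block triangular with symmetric diagonal blocks, its spectrum is $\mathrm{spec}(\mathbf{S})\cup c\,\mathrm{spec}(-\hessian_{\y\y})$, which is real and positive by the second-order condition (Proposition~\ref{prop:sec-necessary}); the off-diagonal block $\hessian_{\x\y}$ affects only eigenvectors (hence polynomial prefactors), not the asymptotic rate.

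\textbf{Plain FR and the effective condition number $\Theta(\kappa^2)$.} The crux is that $\tilde{\mathbf A}$ has condition number $\Theta(\kappa^2)$, not $\Theta(\kappa)$. Take equal learning rates $c=1$. The smallest eigenvalue of $\tilde{\mathbf A}$ is exactly $\alpha$; for the largest I bound the Schur complement by
\[
\|\mathbf{S}\|\le\|\hessian_{\x\x}\|+\|\hessian_{\x\y}\|\,\|\hessian_{\y\y}^{-1}\|\,\|\hessian_{\y\x}\|\le \beta+\beta\cdot\tfrac{1}{\alpha}\cdot\beta\le \tfrac{2\beta^2}{\alpha},
\]
using $\|\hessian_{\x\x}\|,\|\hessian_{\x\y}\|,\|\hessian_{\y\x}\|\le\|\nabla^2 f\|=\beta$, $\|\hessian_{\y\y}^{-1}\|\le 1/\alpha$, and $\kappa\ge 1$. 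Hence all eigenvalues of $\tilde{\mathbf A}$ lie in $[\alpha,\,\beta']$ with $\beta':=2\beta^2/\alpha$, so $\tilde\kappa:=\beta'/\alpha\le 2\kappa^2$. Choosing $\eta_{\x}=\alpha/(2\beta^2)$ makes $\mathbf{M}=\iden-\eta_{\x}\tilde{\mathbf A}$ have all eigenvalues in $[0,\,1-\alpha^2/(2\beta^2)]$, whence $\rho(\jacobian)=\rho(\mathbf{M})\le 1-\Theta(\kappa^{-2})$ and the asymptotic rate is $\Omega(\kappa^{-2})$.

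\textbf{Momentum.} Linearizing the momentum update (Eq.~\ref{equ:momentum}) and applying the same transform $\mathbf{T}$ yields the second-order recurrence $\tilde\delta_{t+1}=(\mathbf{M}+\gamma\iden)\tilde\delta_t-\gamma\tilde\delta_{t-1}$, i.e.\ the first-order system with iteration matrix $\mathbf{B}=\left[\begin{smallmatrix}\mathbf{M}+\gamma\iden&-\gamma\iden\\ \iden&\zero\end{smallmatrix}\right]$. The key point, enabled by the Remark that every eigenvalue $\lambda$ of $\mathbf{M}$ is real, is that the spectrum of $\mathbf{B}$ decouples over the eigenvalues of $\mathbf{M}$: each real $\lambda$ contributes two roots of the scalar heavy-ball equation $\mu^2-(\lambda+\gamma)\mu+\gamma=0$. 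Writing $\lambda=1-\eta_{\x}s$ for $s\in\mathrm{spec}(\tilde{\mathbf A})\subset[\alpha,\beta']$ and using the Polyak-optimal choice $\gamma=\big(\tfrac{\sqrt{\beta'}-\sqrt{\alpha}}{\sqrt{\beta'}+\sqrt{\alpha}}\big)^2$, $\eta_{\x}=\big(\tfrac{2}{\sqrt{\beta'}+\sqrt{\alpha}}\big)^2$ makes the discriminant negative for every $s$, so $|\mu|=\sqrt{\gamma}$ for all roots. This gives $\rho(\mathbf{B})=\sqrt{\gamma}=1-\Theta(\tilde\kappa^{-1/2})=1-\Theta(\kappa^{-1})$ and $\gamma=1-\Theta(\kappa^{-1})$, i.e.\ asymptotic rate $\Omega(\kappa^{-1})$, as claimed.

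\textbf{Main obstacle.} I expect the real work to be twofold. First, pinning down that the effective condition number is $\Theta(\kappa^2)$ rather than $\kappa$: this is what forces the $\kappa^{-2}$ (vs.\ $\kappa^{-1}$) rate, and it rests on the Schur-complement norm bound above; for a matching $\Theta$ one should also exhibit the lower bound showing $\kappa^2$ is essentially unavoidable. Second, the momentum step needs care because $\tilde{\mathbf A}$ (hence $\mathbf{M}$ and $\mathbf{B}$) is non-symmetric: the clean heavy-ball optimization is legitimate only because all eigenvalues of $\mathbf{M}$ are \emph{real}, so the scalar characteristic polynomial applies eigenvalue-by-eigenvalue, and the non-normality contributes only polynomial prefactors that wash out of $\lim_t\|\mathbf{B}^t\|^{1/t}=\rho(\mathbf{B})$. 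Had the spectrum carried imaginary parts (as for GDA, cf.\ the Remark), positive momentum would not deliver this acceleration, which is exactly why realness of the spectrum is the linchpin of the argument.
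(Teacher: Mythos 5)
Your proposal is correct and follows essentially the same route as the paper's proof: the same block-triangular similarity transform and Schur-complement bound $\lambda_{\max}(\hessian_{\x\x}-\hessian_{\x\y}\hessian_{\y\y}^{-1}\hessian_{\y\x})\le \beta+\beta^2/\alpha$ place the FR Jacobian eigenvalues in $[0,\,1-\tfrac{1}{2\kappa^2}]$ with step size $\Theta(1/(\kappa\beta))$, and the same companion-matrix reduction to the scalar heavy-ball equation $\mu^2-(\lambda+\gamma)\mu+\gamma=0$ over the real spectrum gives $\rho=\sqrt{\gamma}=1-\Theta(\kappa^{-1})$. The only cosmetic differences are that you re-tune $\eta$ to the Polyak-optimal value while the paper keeps $\eta=\tfrac{1}{2\kappa\beta}$ and instead picks $\gamma=1+\tfrac{1}{2\kappa^2}-\tfrac{\sqrt{2}}{\kappa}$, and that you appeal to Gelfand's formula where the paper proves an explicit Ostrowski-type statement (its Proposition~\ref{prop:local}) to pass from the linearization to local nonlinear convergence.
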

Experiments of the speedup of momentum are provided in Appendix~\ref{sec:momentum}. This is in contrast to gradient descent-ascent, whose complex Jacobian eigenvalues prevent the use of positive momentum. Instead, negative momentum may be more preferable~\citep{negmomentum2018}, which does not achieve the same level of acceleration.

\subsection{General Stackelberg Games}
\begin{algorithm}[h]
\caption{Follow-the-Ridge (FR) for general-sum Stackelberg games. }\label{alg:FR-stackelberg}
\begin{algorithmic}[1]
    \Require{Learning rate $\eta_\x$ and $\eta_\y$; number of iterations $T$.} 
    \For{$t = 1, ..., T$}
        \State $\x_{t+1}\gets \x_t - \eta_{\x} {\color{mydarkred}D_{\x}f(\x_t,\y_t)}$ \Comment{total derivative ${\color{mydarkred}D_{\x}f} = \nabla_\x f - \nabla^2_{\x\y}g(\nabla^2_{\y\y} g)\inv \nabla_\y f$}
        \State $\y_{t+1}\gets \y_t - \eta_{\y} \nabla_{\y} g(\x_t,\y_t)+{\color{mydarkblue}{\eta_{\x} (\nabla^2_{\y\y} g)\inv \nabla^2_{\y\x}g D_{\x}f(\x_t,\y_t)}}$ 
    \EndFor
\end{algorithmic}
\end{algorithm}
Here, we further extend FR to general sequential games, also known as Stackelberg games. The leader commits to an action $\x$, while the follower plays $\y$ in response. The leader aims to minimize its cost $f(\x,\y)$, while the follower aims at minimizing $g(\x,\y).$ For Stackelberg games, the notion of equilibrium is captured by \emph{Stackelberg equilibrium}, which is essentially the solution to the following optimization problem:%
\begin{equation*}
    \min_{\x \in \R^n} \left\{f(\x, \y) | \y \in \argmin_{\y \in \R^m} g(\x, \y) \right\}.
\end{equation*}
It can be seen that minimax optimization is the special case when $g = -f$.

Similarly, one can define local Stackelberg equilibrium as a generalization of local minimax in general-sum games~\citep{fiez2019convergence}. Stackelberg game has wide applications in machine learning. To name a few, both multi-agent reinforcement learning~\citep{littman1994markov} and hyperparameter optimization~\citep{maclaurin2015gradient} can be formulated as finding Stackelberg equilibria. 

For general-sum games, naive gradient dynamics, \ie, both players taking gradient updates with their own cost functions, is no longer a reasonable algorithm, as local Stackelberg equilibria in general may not be stationary points. Instead, the leader should try to use the total derivative of $f(\x,r(\x))$, where $r(\x)$ is a local best response for the follower. Thus the counterpart of gradient descent-ascent in general-sum games is actually gradient dynamics with best-response gradient~\citep{fiez2019convergence}:
\begin{equation}
\begin{aligned}
    \x_{t+1} &\gets \x_t-\eta\left[\nabla_{\x}f- \nabla^2_{\x\y}g\left(\nabla^2_{\y\y} g\right)^{-1} \nabla_\y f\right](\x_t, \y_t), \\
    \y_{t+1} &\gets \y_t-\eta \nabla_{\y}g (\x_t, \y_t).
\end{aligned}
\end{equation}
FR can be adapted to general-sum games by adding the same correction term to the follower. The combined update rule is given in Algorithm~\ref{alg:FR-stackelberg}. Similarly, we show that FR for Stackelberg games locally converges exactly to local Stackelberg equilibria (see Appendix~\ref{sec:nonzerosum} for rigorous proof.)

\section{Related Work}
\vspace{-0.1cm}
As a special case of Stackelberg games~\citep{ratliff2016on} in the zero-sum setting, minimax optimization concerns the problem of solving $\min_{\x\in\mathcal{X}} \max_{\y\in\mathcal{Y}} f(\x,\y)$. The problem has received wide attention due to its extensive applications in modern machine learning, in settings such as generative adversarial networks (GANs), adversarial training. The vast majority of this line of research focus on convex-concave setting~\citep{kinderlehrer1980introduction, nemirovski1978cesari, nemirovski2004prox, mokhtari2019unified,mokhtari2019proximal}. Beyond the convex-concave setting, \citet{rafique2018non, lu2019hybrid, lin2019gradient, nouiehed2019solving} consider nonconvex-concave problems, \ie, where $f$ is nonconvex in $\x$ but concave in $\y$. In general, there is no hope to find global optimum efficiently in nonconvex-concave setting.

More recently, nonconvex-nonconcave problem has gained more attention due to its generality. Particularly, there are several lines of work analyzing the dynamics of gradient descent-ascent (GDA) in nonconvex-nonconcave setting (such as GAN training). Though simple and intuitive, GDA has been shown to have undersirable convergence properties~\citep{adolphs2018local, daskalakis2018limit, mazumdar2019finding, jin2019minmax} and exhibit strong rotation around fixed points~\citep{mescheder2017numerics, balduzzi2018mechanics}. To overcome this rotation behaviour of GDA, various modifications have been proposed, including averaging~\citep{yazici2018unusual}, negative momentum~\citep{negmomentum2018}, extragradient (EG)~\citep{korpelevich1976extragradient, mertikopoulos2018optimistic}, optimistic mirror descent (OGDA)~\citep{daskalakis2017training}, consensus optimization (CO)~\citep{mescheder2017numerics} and symplectic gradient (SGA)~\citep{balduzzi2018mechanics, gemp2018global}. However, we note that all these algorithms discard the underlying sequential structure of minimax optimization and adopt a simultaneous game formulation. In this work, we hold that GAN training is better viewed as a sequential game rather than a simultaneous game. The former is more consistent with the divergence minimization interpretation of GANs; there is also some empirical evidence showing that well-performing GAN generators are closer to a saddle-point instead of a local minimum~\citep{berard2019closer}, which suggests that local Nash, the typical solution concept for simultaneous games, may not be the most appropriate one for GANs.%

To the best of our knowledge, the only two methods that can (and only) converge to local minimax are two time-scale GDA~\citep{jin2019minmax} and gradient dynamics with best response gradient~\citep{fiez2019convergence}. In two time-scale GDA, the leader moves infinitely slower than the follower, which may cause slow convergence due to infinitely small learning rates. The dynamics in~\citet{fiez2019convergence} is proposed for general-sum games. However, their main result for general-sum games require stronger assumptions and even in that case, the dynamics can converge to non-local-Stackelberg points in general-sum games. In contrast, in general-sum games, FR will not converge to non-local-Stackelberg points. Besides, \citet{adolphs2018local, mazumdar2019finding} attempt to solve the undesirable convergence issue of GDA by exploiting curvature information, but they focus on simultaneous game on finding local Nash and it is unclear how to extend their algorithm to sequential games.

For GAN training, there is a rich literature on different strategies to make the GAN-game well-defined, \eg, by adding instance noise~\citep{salimans2016improved}, by using different objectives~\citep{nowozin2016f, gulrajani2017improved, arjovsky2017wasserstein, mao2017least} or by tweaking the architectures~\citep{radford2015unsupervised, brock2018large}. While these strategies try to make the overall optimization problem easily, our work deals with a specific optimization problem and convergence issues arise in theory and in practice; hence our algorithm is orthogonal to these work. 

\section{Experiments}
\vspace{-0.1cm}
In this section, we investigate whether the theoretical guarantees of FR carry over to practical problems. Particularly, our experiments have three main aims: (1) to test if FR converges and only converges to local minimax, (2) to test the effectiveness of FR in training GANs with saturating loss, (3) to test whether FR addresses the notorious rotation problem in GAN training.

\subsection{Low Dimensional Toy Examples}\label{sec:quad-pro}
\vspace{-0.15cm}
\begin{figure}[t]
\vspace{-0.6cm}
	\centering
    \begin{subfigure}[t]{0.325\textwidth}
        \centering
        \includegraphics[width=0.95\textwidth]{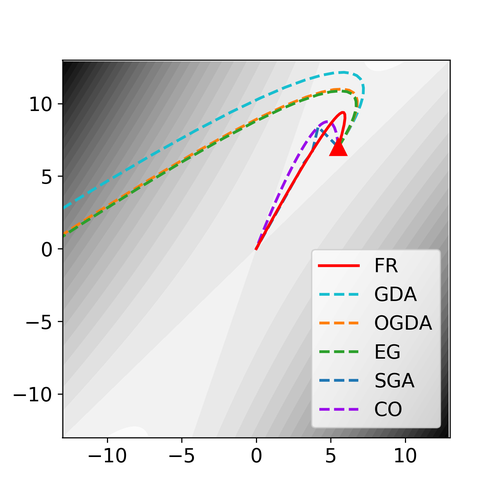}
        \vspace{-0.45cm}
        \caption{GDA diverges}
    \end{subfigure}
    \begin{subfigure}[t]{0.325\textwidth}
        \centering
        \includegraphics[width=0.95\textwidth]{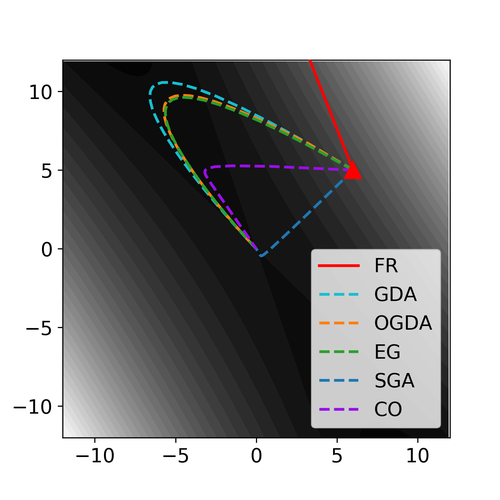}
        \vspace{-0.45cm}
        \caption{GDA converges to a bad fixed point that is non local minimax}
    \end{subfigure}
    \begin{subfigure}[t]{0.325\textwidth}
        \centering
        \includegraphics[width=0.95\textwidth]{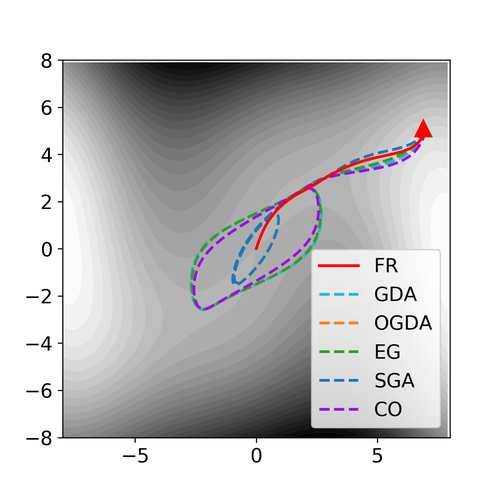}
        \vspace{-0.45cm}
        \caption{Limiting cycle}
    \end{subfigure}
    \vspace{-0.25cm}
	\caption{%
	Trajectory of FR and other algorithms in low dimensional toy problems. \textbf{Left:} for $g_1$, $(0, 0)$ is local minimax. \textbf{Middle:} for $g_2$, $(0, 0)$ is \textbf{NOT} local minimax. \textbf{Right:} for $g_3$, $(0,0)$ is a local minimax. The contours are for the function value. The red triangle marks the initial position.}
	\label{fig:fr}
\end{figure}
To verify our claim on exact local convergence, we first compare FR with gradient descent-ascent (GDA), optimistic mirror descent (OGDA)~\citep{daskalakis2017training}, extragradient (EG)~\citep{korpelevich1976extragradient}, symplectic gradient adjustment (SGA)~\citep{balduzzi2018mechanics} and consensus optimization (CO)~\citep{mescheder2017numerics} on three simple low dimensional problems: %
\begin{equation*}
\begin{aligned}
    g_1(x,y) &= -3x^2-y^2+4xy \\ 
    g_2(x,y) &= 3x^2+y^2+4xy \\
    g_3(x,y) &= \left(4x^2-(y-3x+0.05x^3)^2-0.1y^4\right)e^{-0.01(x^2+y^2)}.
\end{aligned}
\end{equation*}
Here $g_1$ and $g_2$ are two-dimensional quadratic problems, which are arguably the simplest nontrivial problems. $g_3$ is a sixth-order polynomial scaled by an exponential, which has a relatively complicated landscape compared to $g_1$ and $g_2$.

It can be seen that when running in $g_1$, where $(0,0)$ is a local (and in fact global) minimax, only FR, SGA and CO converge to it; all other method diverges (the trajectories of OGDA and EG almost overlap). The main reason behind the divergence of GDA is that gradient of leader pushes the system away from the local minimax when it is a local maximum for the leader. In $g_2$, where $(0,0)$ is not a local minimax, all algorithms except for FR converges to this undesired stationary point\footnote{Note that it is a local \emph{minimum} for the follower.}. In this case, the leader is still to blame for the undesirable convergence of GDA (and other variants) since it gets trapped by the gradient pointing to the origin. In $g_3$, FR can converge to $(0,0)$, which is a local minimax, while all other methods apparently enter limit cycles around $(0,0)$. The experiments suggest that even on extremely simple instances, existing algorithms can either fail to converge to a desirable fixed point or converge to bad fixed points, whereas FR always exhibits desirable behaviors.%

\subsection{Generative Adversarial Networks}
One particularly promising application of minimax optimization algorithms is training generative adversarial networks (GANs). To recover the divergence minimization objective (e.g., Jensen-Shannon divergence in standard GANs), we have to model the adversarial game as a sequential game.
According to the formulation, the generator is the leader who commits to an action first, while the discriminator is the follower that helps the generator to learn the target data distribution. 

\begin{figure}[t]
\vspace{-0.3cm}
	\centering
    \includegraphics[width=0.99\textwidth]{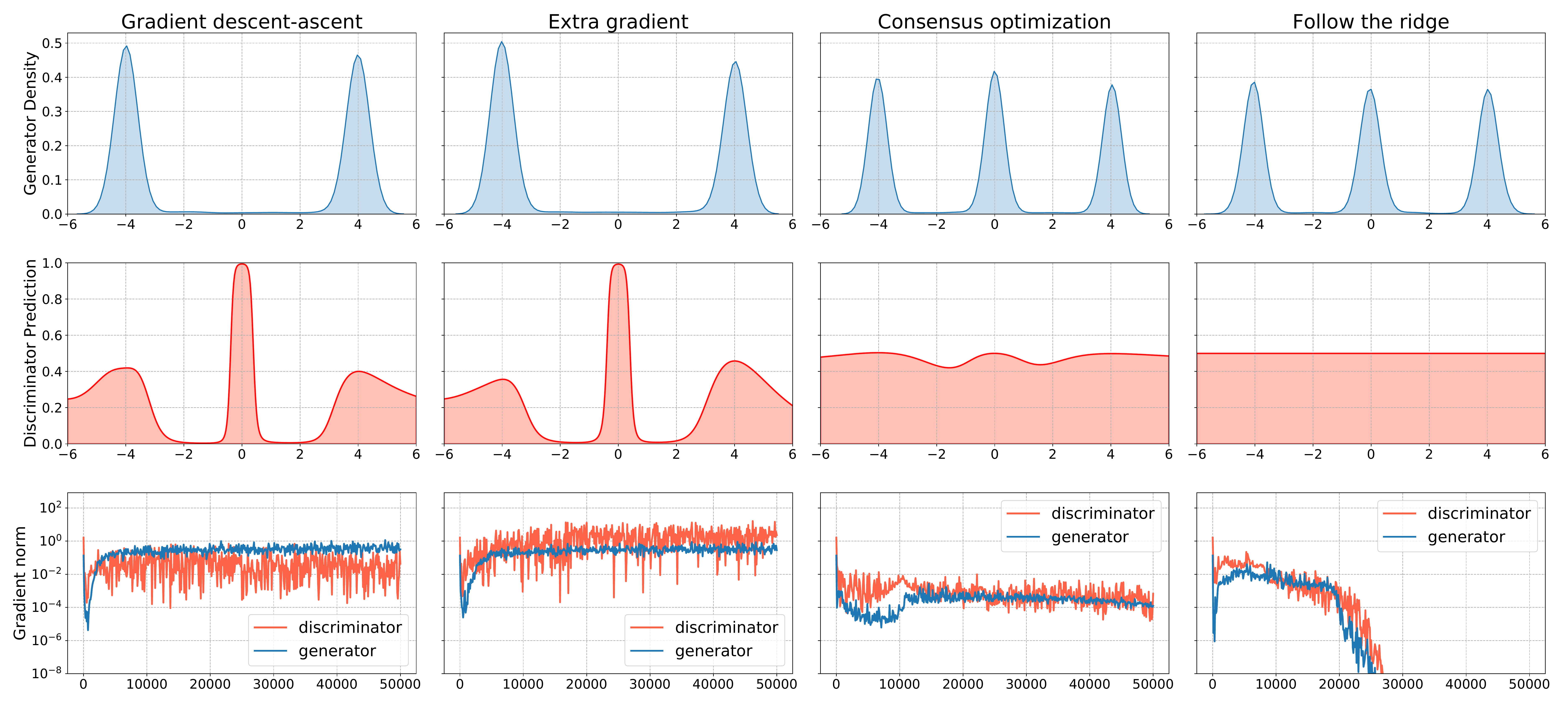}
    \vspace{-0.3cm}
    \caption{Comparison between FR and other algorithms on GANs with saturating loss. \textbf{First Row:} Generator distribution. Only consensus optimization (CO) and FR capture all three modes. \textbf{Second Row:} Discriminator prediction. The discriminator trained by FR converges to a flat line, indicating being fooled by the generator. \textbf{Third Row:} Gradient norm as a function of iteration. Only in the case of FR, the gradient norm vanishes. To be noted, we use Gaussian kernel with bandwidth $0.1$ for all KDE plots above.}
	\label{fig:gan}
\end{figure}
\begin{figure}[t]
\vspace{-0.2cm}
	\centering
    \includegraphics[width=0.96\textwidth]{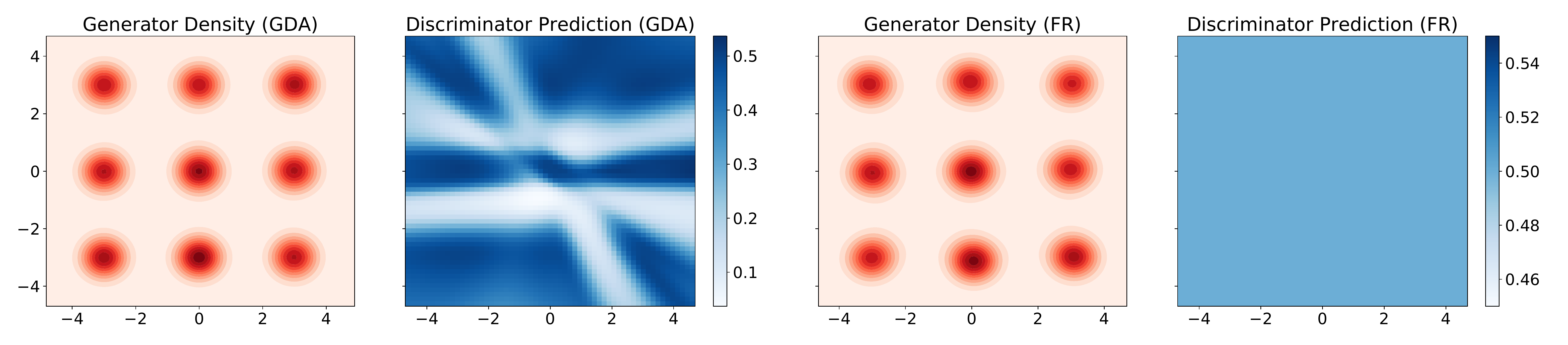}
    \vspace{-0.4cm}
    \caption{Comparison between FR and GDA on 2D mixture of Gaussians. \textbf{Left:} GDA; \textbf{Right:} FR.}
	\label{fig:mog-2d}
\end{figure}
\subsubsection{Mixture of Gaussians}\label{sec:mog}
We first evaluate 4 different algorithms (GDA, EG, CO and FR) on mixture of Gaussian problems with the original saturating loss. To satisfy the non-singular Hessian assumption, we add $L_2$ regularization (0.0002) to the discriminator. For both generator and discriminator, we use 2-hidden-layers MLP with 64 hidden units each layer where tanh activations is used. By default, RMSprop~\citep{tieleman2012lecture} is used in all our experiments while the learning rate is tuned for GDA. As our FR involves the computation of Hessian inverses which is computational prohibitive, we instead use conjugate gradient~\citep{martens2010deep, nocedal2006numerical} to solve the linear system in the inner loop. To be specific, instead of solving $\hessian_{\y\y}\z=\hessian_{\y\x}\nabla_{\x}f$ directly, we solve $\hessian_{\y\y}^2 \z=\hessian_{\y\y}\hessian_{\y\x}\nabla_{\x}f$ to ensure that the problem is well-posed since $\hessian_{\y\y}^2$ is always positive semidefinite. For all experimental details, we refer readers to Appendix~\ref{app:mog}.

\begin{wrapfigure}[10]{R}{0.39\textwidth}
    \centering
    \vspace{-0.7cm}
    \includegraphics[width=2.0in]{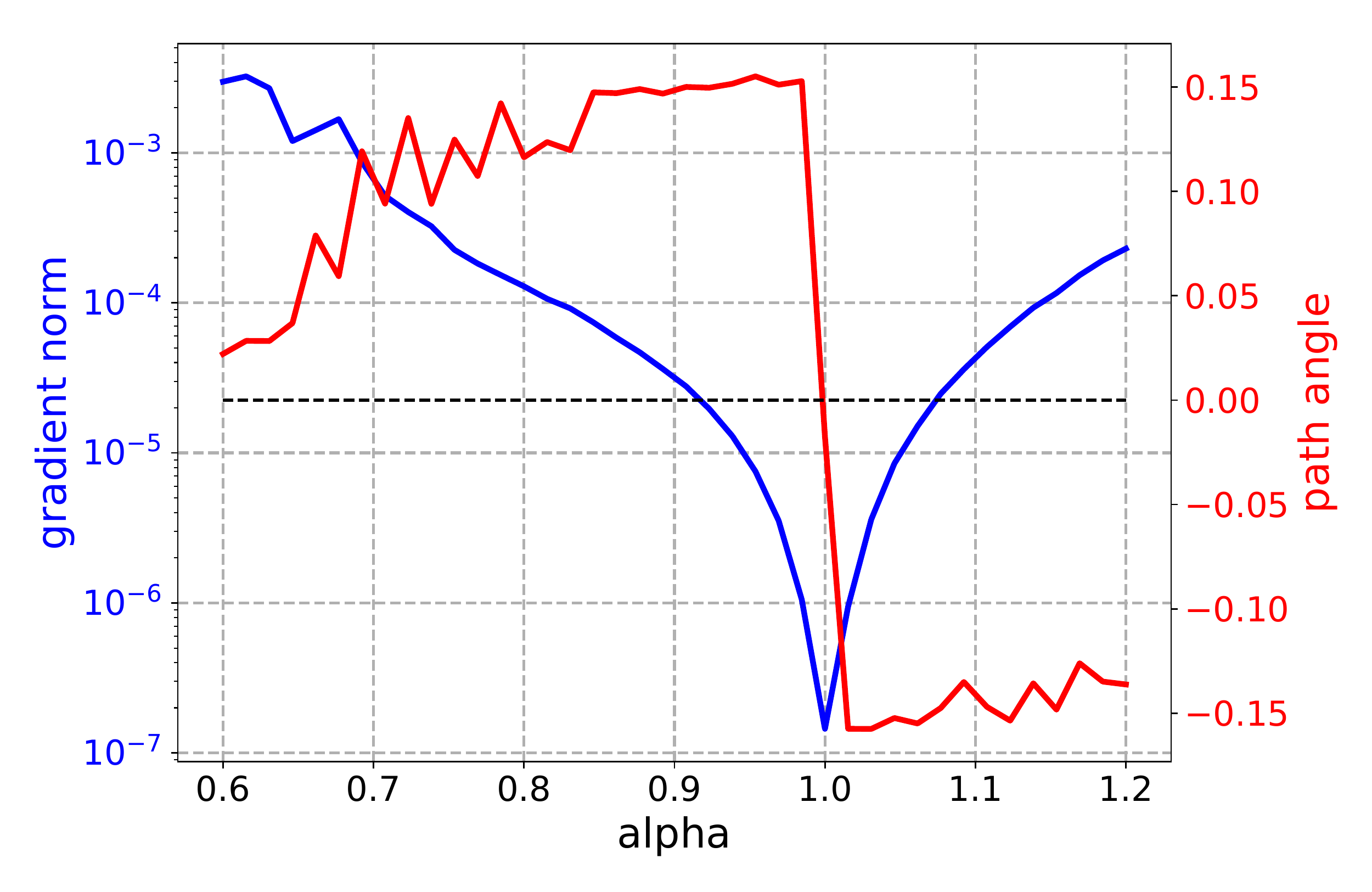}
    \vspace{-0.4cm}
    \caption{Path-norm and path-angle of FR along the linear path.}
    \label{fig:path}
\end{wrapfigure}
As shown in Fig.~\ref{fig:gan}, GDA suffers from the “missing mode” problem and both discriminator and generator fail to converge as confirmed by the gradient norm plot. EG fails to resolve the convergence issue of GDA and performs similarly to GDA. With tuned gradient penalties, consensus optimization (CO) can successfully recover all three modes and obtain much smaller gradient norm. However, we notice that the gradient norm of CO decreases slowly and that both the generator and the discriminator have not converged after 50,000 iterations.
In contrast, the generator trained with FR successfully learns the true distribution with three modes and the discriminator is totally fooled by the generator. As expected, both players reach much lower gradient norm with FR, indicating fast convergence. Moreover, we find that even if initialized with GDA-trained networks (the top row of Fig.~\ref{fig:gan}), FR can still find all the modes at the end of training.

To check whether FR fixes the strong rotation problem around fixed points, we follow~\citet{berard2019closer} to plot the gradient norm and path-angle (see Fig.~\ref{fig:path}). By interpolating between the initial parameters $\z$ and the final parameters $\z^*$,  they proposed to monitor the angle between the vector field $\mathbf{v}$ and the linear path from $\z$ to $\z^*$. Specifically, they looked at the quantity -- path-angle, defined as
\begin{equation*}
    \theta(\alpha) = \frac{\langle \z^*-\z, \mathbf{v}_\alpha \rangle}{\|\z^*-\z \| \|\mathbf{v}_\alpha \|} \text{  where  } \mathbf{v}_\alpha = \mathbf{v}(\alpha \z + (1-\alpha) \z^*).
\end{equation*}
They showed that a high ``bump'' around $\alpha=0$ in the path-angle plot typically indicates strong rotation behaviour. We choose $\alpha = [0.6, 1.2]$ and plot the gradient norm and path-angle along the linear path for the updates of FR. In particular, we only observe a sign-switch around the fixed point $\z^*$ without an obvious bump, suggesting that FR doesn't exhibit rotational behaviour around the fixed point.
To further check if FR converges to local minimax, we check the second-order condition of local minimax by computing the eigenvalues of $\hessian_{\x\x}-\hessian_{\x\y}\hessian_{\y\y}^{-1}\hessian_{\y\x}$ and $\hessian_{\y\y}$. As expected, all eigenvalues of $\hessian_{\x\x}-\hessian_{\x\y}\hessian_{\y\y}^{-1}\hessian_{\y\x}$ are non-negative while all eigenvalues of $\hessian_{\y\y}$ are non-positive. 

\begin{wrapfigure}[9]{R}{0.5\textwidth}
    \centering
    \vspace{-0.5cm}
    \includegraphics[width=0.498\textwidth]{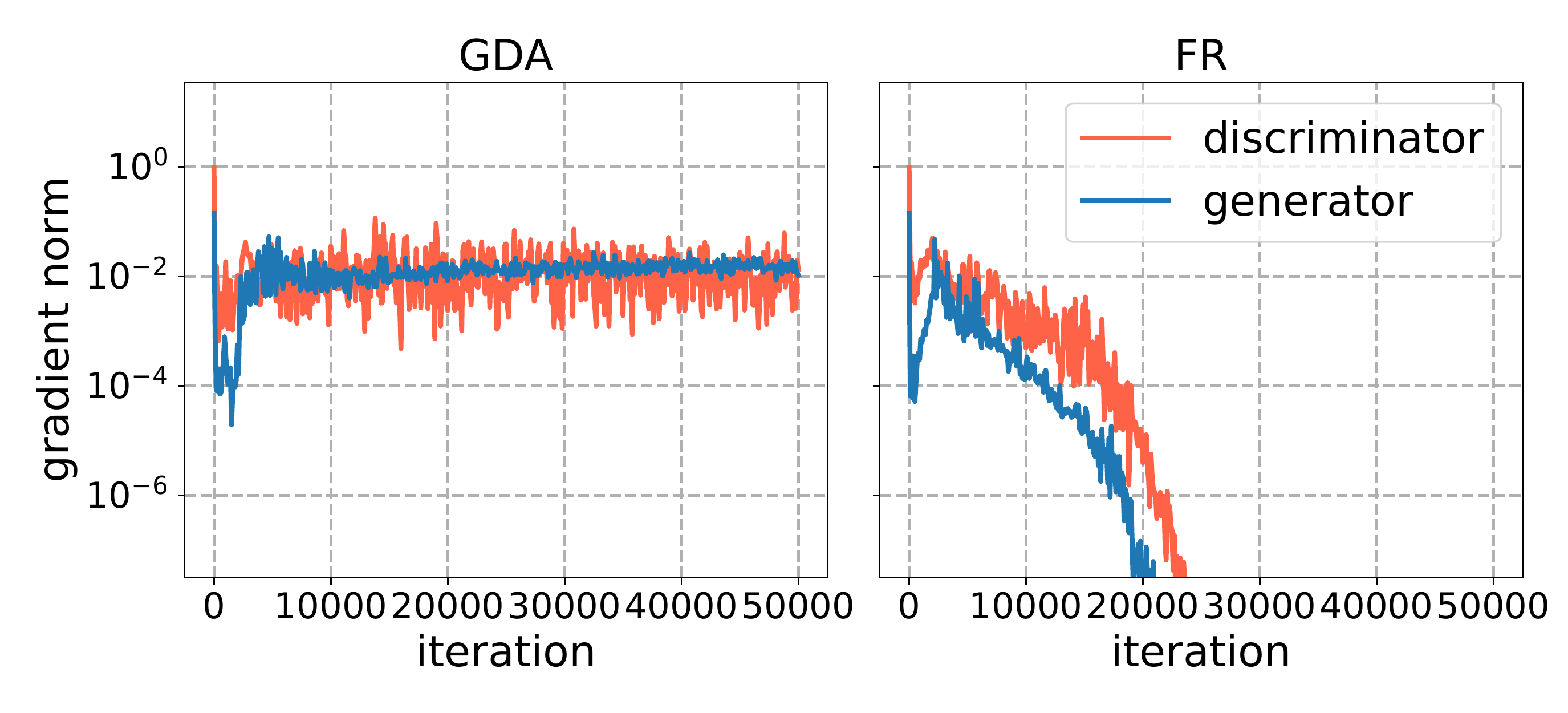}
    \vspace{-0.8cm}
    \caption{Gradient norms of GDA and FR.}
    \label{fig:gn}
\end{wrapfigure}
We also run FR on 2-D mixture of Gaussian with the same architectures (see Fig.~\ref{fig:mog-2d}) and compare it to vanilla GDA. Though GDA captures all the modes, we note that both the generator and the discriminator don't converge which can be seen from the gradient norm plot in Fig.~\ref{fig:gn}. In contrast, the discriminator trained by FR is totally fooled by the generator and gradients vanish. We stress here that the sample quality in GAN models is not a good metric of checking convergence as we shown in the above example. 

\subsubsection{Preliminary Results on MNIST}
In a more realistic setting, we test our algorithm on image generation task. Particularly, we use the standard MNIST dataset~\citep{lecun1998gradient} but only take a subset of the dataset with class 0 and 1 for quick experimenting. To stabilize the training of GANs, we employ spectral normalization~\citep{miyato2018spectral} to enforce
Lipschitz continuity on the discriminator. To ensure the invertibility of the discriminator's Hessian, we add the same amount of $L_2$ regularization to the discriminator as in mixture of Gaussian experiments. In terms of network architectures, we use 2-hidden-layers MLP with 512 hidden units in each layer for both the discriminator and the generator. For the discriminator, we use $\mathrm{Sigmoid}$ activation in the output layer. We use RMSProp as our base optimizer in the experiments with batch size 2,000. We run both GDA and FR for 100,000 iterations.

\begin{figure}[h]
\vspace{-0.2cm}
	\centering
    \includegraphics[width=\textwidth]{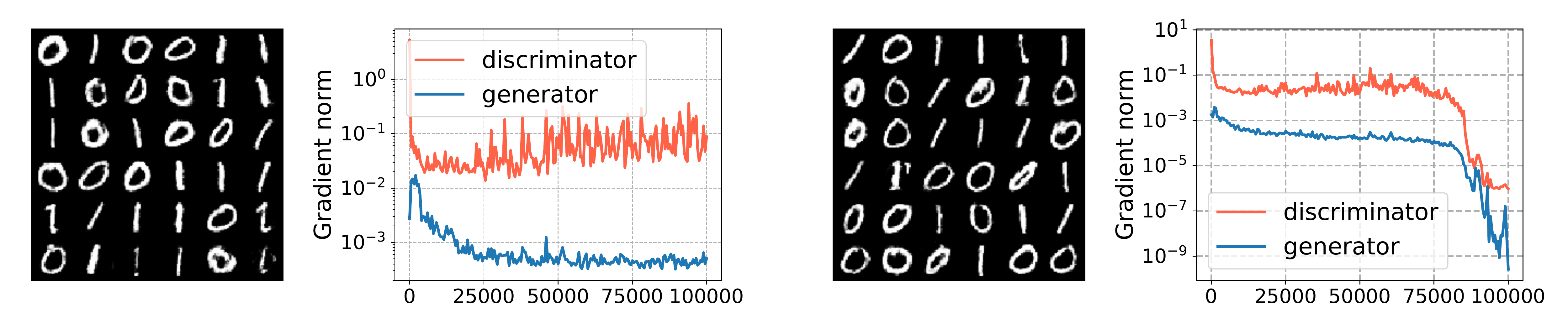}
    \vspace{-0.5cm}
    \caption{Comparison between FR and GDA on MNIST dataset. \textbf{Left:} GDA; \textbf{Right:} FR.}
	\label{fig:mnist}
\end{figure}
In Fig.~\ref{fig:mnist}, we show the generated samples of GDA and FR along with the gradient norm plots. Our main observation is that FR improves convergence as the gradient norms of both discriminator and generator decrease much faster than GDA; however the convergence is not well reflected by the quality of generated samples. We notice that gradients don't vanish to zero at the end of training. We conjecture that for high-dimensional data distribution like images, the network we used is not flexible enough to learn the distribution perfectly.

\section{Conclusion}
In this paper, we studied local convergence of learning dynamics in minimax optimization. To address undesirable behaviours of gradient descent-ascent, we proposed a novel algorithm that locally converges to and only converges to local minimax by taking into account the sequential structure of minimax optimization. Meanwhile, we proved that our algorithm addresses the notorious rotational behaviour of vanilla gradient-descent-ascent around fixed points. We further showed theoretically that our algorithm is compatible with standard acceleration techniques, including preconditioning and positive momentum. Our algorithm can be easily extended to general-sum Stackelberg games with similar theoretical guarantees. Empirically, we validated the effectiveness of our algorithm in both low-dimensional toy problems and GAN training.

\section*{Acknowledgement}
We thank Kefan Dong, Roger Grosse and Shengyang Sun for helpful comments on this project.

\bibliographystyle{plainnat}
\bibliography{iclr2020_conference.bib}

\appendix
\newpage

\section{Proof of Theorem 1}
\begin{proof}
First of all, note that FR's update rule can be rewritten as
\begin{equation}
\begin{aligned}
\left[\begin{matrix}
\x_{t+1}\\ \y_{t+1}
\end{matrix}\right]\gets \left[\begin{matrix}
\x_{t}\\ \y_{t}
\end{matrix}\right]-\eta_{\x} \left[\begin{matrix}
\iden&\\
-\hessian_{\y\y}^{-1}\hessian_{\y\x}&c \iden
\end{matrix}\right]\left[\begin{matrix}
\nabla_{\x} f\\ -\nabla_{\y} f
\end{matrix}\right],
\end{aligned}
\end{equation}
where $c:=\eta_{\y}/\eta_{\x}$, and that $\left[\begin{matrix}
\iden&\\
-\hessian_{\y\y}^{-1}\hessian_{\y\x}&c \iden
\end{matrix}\right]$ is always invertible. Therefore, the fixed points of FR are exactly those that satisfy $\nabla f(\x,\y)=0$, \ie, the first-order necessary condition of local minimax.

Now, consider a fixed point $(\x^*,\y^*)$. The Jacobian of FR's update rule at $(\x^*,\y^*)$ is given by
\begin{equation*}
\jacobian=\iden-\eta_{\x}\left[\begin{matrix}
\iden&\\
-\hessian_{\y\y}^{-1}\hessian_{\y\x}& \iden
\end{matrix}\right]\left[\begin{matrix}
\hessian_{\x\x}& \hessian_{\x\y} \\ -c \hessian_{\y\x} & -c \hessian_{\y\y}
\end{matrix}\right].
\end{equation*}
Observe that $\jacobian$ is similar to
\begin{align*}
&\left[\begin{matrix}
\iden&\\
\hessian_{\y\y}^{-1} \hessian_{\y\x}& \iden
\end{matrix}\right] \jacobian \left[\begin{matrix}
\iden&\\
-\hessian_{\y\y}^{-1} \hessian_{\y\x}& \iden
\end{matrix}\right]\\
=&\iden-\eta_{\x}\left[\begin{matrix}
\iden&\\
\hessian_{\y\y}^{-1} \hessian_{\y\x}& \iden
\end{matrix}\right]
\left[\begin{matrix}
\iden&\\-\hessian_{\y\y}^{-1}\hessian_{\y\x}& \iden
\end{matrix}\right]
\left[\begin{matrix}
\hessian_{\x\x}& \hessian_{\x\y} \\ -c \hessian_{\y\x} & -c \hessian_{\y\y}
\end{matrix}\right]
\left[\begin{matrix}
\iden&\\-\hessian_{\y\y}^{-1} \hessian_{\y\x}& \iden
\end{matrix}\right]\\
=&\iden-\eta_{\x}\left[\begin{matrix}
\hessian_{\x\x}-\hessian_{\x\y}\hessian_{\y\y}^{-1}\hessian_{\y\x}& \hessian_{\x\y} \\  & -c \hessian_{\y\y}
\end{matrix}\right],
\end{align*}
which is block diagonal. Therefore, the eigenvalues of $\jacobian$ are exactly those of $\iden+\eta_{\y}\hessian_{\y\y}$ and those of $\iden-\eta_{\x}(\hessian_{\x\x}-\hessian_{\x\y}\hessian_{\y\y}^{-1}\hessian_{\y\x})$, which are all real because both matrices are symmetric.

Moreover, suppose that 
\begin{equation*}
\eta_{\x}<\frac{2}{\max\left\{\rho(\hessian_{\x\x}-\hessian_{\x\y}\hessian_{\y\y}^{-1}\hessian_{\y\x}),c\rho(-\hessian_{\y\y})\right\}},
\end{equation*}
where $\rho(\cdot)$ stands for spectral radius. In this case
\begin{equation*}
-\iden \prec \iden+\eta_{\y}\hessian_{\y\y},\quad -\iden\prec \iden-\eta_{\x}(\hessian_{\x\x}-\hessian_{\x\y}\hessian_{\y\y}^{-1}\hessian_{\y\x}).
\end{equation*}
Therefore whether $\rho(\jacobian)<1$ depends on whether $-\hessian_{\y\y}$ or $\hessian_{\x\x}-\hessian_{\x\y}\hessian_{\y\y}^{-1}\hessian_{\y\x}$ has negative eigenvalues. If $(\x^*,\y^*)$ is a local minimax, by the necessary condition, $\hessian_{\y\y}\preccurlyeq \mathbf{0}$, $\hessian_{\x\x}-\hessian_{\x\y}\hessian_{\y\y}^{-1}\hessian_{\y\x}\succcurlyeq \mathbf{0}$. It follows that the eigenvalues of $\jacobian$ all fall in $(-1,1]$. $(\x^*,\y^*)$ is thus a stable fixed point of FR.

On the other hand, when $(\x^*,\y^*)$ is a strictly stable fixed point, $\rho(\jacobian)<1$. It follows that both $\hessian_{\y\y}$ and $\hessian_{\x\x}-\hessian_{\x\y}\hessian_{\y\y}^{-1}\hessian_{\y\x}$ must be positive definite. By the sufficient conditions of local minimax, $(\x^*,\y^*)$ is a local minimax.
\end{proof}

\section{Proof of Theorem 2}
Consider a general discrete dynamical system $\z_{t+1}\gets g(\z_t)$. Let $\z^*$ be a fixed point of $g(\cdot)$. Let $\jacobian(\z)$ denote the Jacobian of $g(\cdot)$ at $\z$. Similar results can be found in many texts; see, for instance, Theorem 2.12~\citep{olver2015nonlinear}.
\begin{prop}[Local convergence rate from Jacobian eigenvalue]
\label{prop:local}
If $\rho(\jacobian(\z^*))=1-\Delta<1$, then there exists a neighborhood $U$ of $\z^*$ such that for any $\z_0\in U$,
\begin{equation*}
\Vert \z_t-\z^*\Vert_2 \le C\left(1-\frac{\Delta}{2}\right)^t\Vert \z_0-\z^*\Vert_2,
\end{equation*}
where $C$ is some constant.
\end{prop}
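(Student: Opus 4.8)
The plan is to reduce the statement to a contraction-mapping argument in a cleverly chosen norm, and then translate back to the Euclidean norm by norm equivalence. Since $\z^*$ is a fixed point, $g(\z^*)=\z^*$, and because $g$ is differentiable at $\z^*$ (guaranteed by Assumption~1, which makes the update map differentiable) we may Taylor expand: $g(\z)-\z^* = \jacobian(\z^*)(\z-\z^*) + R(\z)$, where the remainder satisfies $\|R(\z)\|/\|\z-\z^*\|\to 0$ as $\z\to\z^*$. Thus the local behaviour is governed by the linear operator $\jacobian(\z^*)$, whose spectral radius is $1-\Delta$.

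The key tool, and the step I expect to be the main obstacle, is that the spectral radius is not directly an operator norm, so I cannot bound $\|\jacobian(\z^*)(\z-\z^*)\|_2$ by $(1-\Delta)\|\z-\z^*\|_2$ in the Euclidean norm (the Jacobian here is real but not symmetric in general). First I would invoke the standard fact that for any $\epsilon>0$ there exists a norm $\|\cdot\|_*$ on $\R^{n+m}$ whose induced operator norm satisfies $\|\jacobian(\z^*)\|_* \le \rho(\jacobian(\z^*)) + \epsilon$; this follows, e.g., from putting $\jacobian(\z^*)$ in Jordan form and rescaling the off-diagonal entries. Choosing $\epsilon = \Delta/4$ gives $\|\jacobian(\z^*)\|_* \le 1 - \tfrac{3\Delta}{4}$.

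With this norm fixed, I would use the Taylor expansion to absorb the remainder. By $\|R(\z)\|_* = o(\|\z-\z^*\|_*)$, there is a ball $U=\{\z : \|\z-\z^*\|_* < \delta\}$ on which $\|R(\z)\|_* \le \tfrac{\Delta}{4}\|\z-\z^*\|_*$. Then for $\z\in U$,
\begin{equation*}
\|g(\z)-\z^*\|_* \le \|\jacobian(\z^*)\|_*\,\|\z-\z^*\|_* + \|R(\z)\|_* \le \left(1-\tfrac{3\Delta}{4}\right)\|\z-\z^*\|_* + \tfrac{\Delta}{4}\|\z-\z^*\|_* = \left(1-\tfrac{\Delta}{2}\right)\|\z-\z^*\|_*.
\end{equation*}
Since the factor $1-\tfrac{\Delta}{2}<1$, the map sends $U$ into itself, so once $\z_0\in U$ all iterates remain in $U$, and a trivial induction yields $\|\z_t-\z^*\|_* \le (1-\tfrac{\Delta}{2})^t\|\z_0-\z^*\|_*$.

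Finally I would pass back to the Euclidean norm. Because all norms on $\R^{n+m}$ are equivalent, there exist constants $c_1,c_2>0$ with $c_1\|\cdot\|_2 \le \|\cdot\|_* \le c_2\|\cdot\|_2$. Combining these with the bound above gives $\|\z_t-\z^*\|_2 \le \tfrac{1}{c_1}\|\z_t-\z^*\|_* \le \tfrac{c_2}{c_1}(1-\tfrac{\Delta}{2})^t\|\z_0-\z^*\|_2$, so the claim holds with $C=c_2/c_1$. The only genuinely non-elementary ingredient is the existence of the adapted norm $\|\cdot\|_*$; everything else is a routine Taylor estimate plus norm equivalence.
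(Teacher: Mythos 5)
Your proposal is correct and takes essentially the same route as the paper's own proof: the same adapted norm with induced operator norm at most $\rho(\jacobian(\z^*))+\Delta/4$ (the paper cites Lemma 5.6.10 of Horn and Johnson, whose proof is exactly the Jordan-form rescaling you sketch), the same Taylor-remainder bound $\Vert R\Vert_*\le\tfrac{\Delta}{4}\Vert \z-\z^*\Vert_*$ yielding the contraction factor $1-\tfrac{\Delta}{2}$, and the same norm-equivalence step giving $C=c_2/c_1$. Your explicit observation that the contraction keeps the iterates inside $U$, so the induction is licit, is a small point the paper leaves implicit, but otherwise the arguments coincide.
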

\begin{proof}
By Lemma 5.6.10~\citep{horn_johnson_2013}, since $\rho(\jacobian(\z^*))=1-\Delta$, there exists a matrix norm $\Vert\cdot \Vert$ induced by vector norm $\Vert\cdot\Vert$ such that $\Vert \jacobian(\z^*)\Vert < 1-\frac{3\Delta}{4}$. Now consider the Taylor expansion of $g(\z)$ at the fixed point $\z^*$:
\begin{equation*}
    g(\z)=g(\z^*)+\jacobian(\z^*)(\z-\z^*)+R(\z-\z^*),
\end{equation*}
where the remainder term satisfies
\begin{equation*}
    \lim_{\z\to\z^*}\frac{R(\z-\z^*)}{\Vert \z-\z^*\Vert}=0.
\end{equation*}
Therefore, we can choose $0<\delta$ such that whenever $\Vert \z-\z^*\Vert<\delta$, $\Vert R(\z-\z^*)\Vert\le \frac{\Delta}{4}\Vert \z-\z^*\Vert.$ In this case,
\begin{equation*}
\begin{aligned}
    \Vert g(\z)-g(\z^*)\Vert &\le \Vert\jacobian(\z^*)(\z-\z^*)\Vert+\Vert R(\z-\z^*)\Vert\\
    &\le \Vert\jacobian(\z^*)\Vert \Vert \z-\z^* \Vert + \frac{\Delta}{4}\Vert \z-\z^*\Vert\\
    &\le \left(1-\frac{\Delta}{2}\right)\Vert \z-\z^*\Vert.
\end{aligned}
\end{equation*}
In other words, when $\z_0\in U=\left\{\z | \; \Vert \z-\z^*\Vert < \delta\right\}$,
\begin{equation*}
    \Vert \z_t-\z^*\Vert \le \left(1-\frac{\Delta}{2}\right)^t\Vert \z_0-\z^*\Vert.
\end{equation*}
By the equivalence of finite dimensional norms, there exists constants $c_1,c_2>0$ such that 
\begin{equation*}
    \forall \z,\quad  c_1\Vert \z\Vert_2\le \Vert \z\Vert \le  c_2\Vert \z\Vert_2.
\end{equation*}
Therefore
\begin{equation*}
    \Vert \z_t-\z^*\Vert_2 \le \frac{c_2}{c_1}\left(1-\frac{\Delta}{2}\right)^t\Vert \z_0-\z^*\Vert_2.
\end{equation*}
\end{proof}

In other words, the rate of convergence is given by the gap between $\rho(\jacobian)$ and $1$. We now prove Theorem 2 using this view.
\begin{proof}[proof of Theorem 2]
In the following proof we use $\Vert \cdot\Vert$ to denote the standard spectral norm. It is not hard to see that $\lambda_{max}(-\hessian_{\y\y})\le \rho(\nabla^2f(\x^*,\y^*))=\beta$ and $\Vert \hessian_{\x\y}\Vert \le \beta$. Also,
\begin{equation*}
    \lambda_{max}(\hessian_{\x\x}-\hessian_{\x\y}\hessian_{\y\y}^{-1}\hessian_{\y\x})\le \Vert \hessian_{\x\x}\Vert + \Vert \hessian_{\x\y}\Vert^2\cdot \Vert \hessian_{\y\y}^{-1}\Vert
    \le \beta+\frac{\beta^2}{\alpha}= (1+\kappa)\beta.
\end{equation*}
Therefore we choose our learning rate to be $\eta_{\x}=\eta_{\y}=\frac{1}{2\kappa\beta}$. In this case, the eigenvalues of the Jacobian of FR without momentum all fall in $\left[0,1-\frac{1}{2\kappa^2}\right]$. Using Proposition~\ref{prop:local}, we can show that FR locally converges with a rate of $\Omega(\kappa^{-2})$.

Now, let us consider FR with momentum:
\begin{align}
\label{equ:momentum2}
\left[\begin{matrix}
\x_{t+1}\\ \y_{t+1}
\end{matrix}\right]\gets \left[\begin{matrix}
\x_{t}\\ \y_{t}
\end{matrix}\right]- \eta_{\x}\left[\begin{matrix}
\iden & \\ -\hessian_{\y\y}^{-1}\hessian_{\y\x} & \iden
\end{matrix}\right]\left[\begin{matrix}
\nabla_{\x}f\\ -\nabla_{\y}f
\end{matrix}\right]
+\gamma \left[\begin{matrix}
\x_{t}-\x_{t-1}\\ \y_{t}-\y_{t-1}
\end{matrix}\right].
\end{align}

This is a dynamical system on the augmented space of $(\x_t,\y_t,\x_{t-1},\y_{t-1})$. Let
\begin{equation*}
\jacobian_1:=\iden-\eta_{\x}\left[\begin{matrix}
\iden&\\
-\hessian_{\y\y}^{-1}\hessian_{\y\x}& \iden
\end{matrix}\right]\left[\begin{matrix}
\hessian_{\x\x}& \hessian_{\x\y} \\ - \hessian_{\y\x} & -\hessian_{\y\y}
\end{matrix}\right]
\end{equation*}
be the Jacobian of the original FR at a fixed point $(\x^*,\y^*)$. Then the Jacobian of Polyak's momentum at $(\x^*,\y^*,\x^*,\y^*)$ is
\begin{equation*}
\jacobian_2:=\left[\begin{matrix}
\gamma\iden + \jacobian_1 & -\gamma \iden \\
\iden   & 0 
\end{matrix}\right].
\end{equation*}
The spectrum of $\jacobian_2$ is given by solutions to
\begin{equation*}
\det\left(\lambda\iden-\jacobian_2\right)=\det\left((\lambda^2-\gamma\lambda+\gamma)\iden - \gamma\jacobian_1\right)=0.
\end{equation*}
In other words, an eigenvalue $r$ of $\jacobian_1$ corresponds to two eigenvalues of $\jacobian_2$ given by the roots of $\lambda^2-(\gamma+r)\lambda+\gamma=0.$ For our case, let us choose $\gamma=1+\frac{1}{2\kappa^2}-\frac{\sqrt{2}}{\kappa}.$ Then for any $r\in\left[0,1-\frac{1}{2\kappa^2}\right]$,
\begin{equation*}
(r+\gamma)^2-4\gamma\le \left(1-\frac{1}{2\kappa^2}+\gamma\right)^2-4\gamma=0.
\end{equation*}
Therefore the two roots of $\lambda^2-(\gamma+r)\lambda+\gamma=0$ must be imaginary, and their magnitude are exactly $\sqrt{\gamma}$. Since $\sqrt{\gamma}\le 1-\frac{1-\gamma}{2}\le 1-\frac{1}{2\sqrt{2}\kappa}$, we now know that $\rho(\jacobian_2)\le 1-\frac{1}{2\sqrt{2}\kappa}$. Using Proposition~\ref{prop:local}, we can see that FR with momentum locally converge with a rate of $\Omega(\kappa^{-1})$.
\end{proof}

\section{Proofs for Section 4}
\subsection{Preconditioning}
Recall that the preconditioned variant of FR is given by
\begin{align}
\label{equ:precond2}
\left[\begin{matrix}
\x_{t+1}\\ \y_{t+1}
\end{matrix}\right]\gets \left[\begin{matrix}
\x_{t}\\ \y_{t}
\end{matrix}\right]- \left[\begin{matrix}
\iden & \\ -\hessian_{\y\y}^{-1}\hessian_{\y\x} & \iden
\end{matrix}\right]\left[\begin{matrix}
\eta_{\x}\mathbf{P}_1\nabla_{\x}f\\ -\eta_{\y}\mathbf{P}_2\nabla_{\y}f
\end{matrix}\right].
\end{align}
We now prove that preconditioning does not effect the local convergence properties.
\begin{prop}
\label{prop:eigensign}
If $A$ is a symmetric real matrix, $B$ is symmetric and positive definite, then the eigenvalues of $AB$ are all real, and $AB$ and $A$ have the same number of positive, negative and zero eigenvalues.
\end{prop}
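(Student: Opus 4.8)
The plan is to reduce the problem to a symmetric matrix by means of the positive definite square root of $B$, and then invoke two classical facts: similarity preserves the spectrum, and congruence preserves inertia (Sylvester's law).

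First I would introduce $B^{1/2}$, the unique symmetric positive definite square root of $B$, which exists by the spectral theorem since $B$ is symmetric positive definite, together with its inverse $B^{-1/2}$. The central observation is that
\[
B^{1/2}(AB)B^{-1/2} = B^{1/2} A B\, B^{-1/2} = B^{1/2} A B^{1/2} =: S,
\]
so $AB$ is similar to $S$. Because $A$ and $B^{1/2}$ are symmetric, $S^\top = B^{1/2} A^\top B^{1/2} = S$, i.e. $S$ is a real symmetric matrix. Hence $S$ has only real eigenvalues by the spectral theorem, and since similar matrices share the same characteristic polynomial, the eigenvalues of $AB$ coincide with those of $S$ and are therefore all real. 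This settles the first claim.

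For the inertia claim I would observe that $S = (B^{1/2})^\top A\, B^{1/2}$ is precisely a congruence of $A$ by the invertible matrix $B^{1/2}$. Sylvester's law of inertia then guarantees that $S$ and $A$ have the same number of positive, negative, and zero eigenvalues. Combining this with the similarity $AB \sim S$, which preserves the entire multiset of eigenvalues, I conclude that $AB$ and $A$ have identical inertia, as required.

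The only point requiring care is that the count of eigenvalues (with algebraic multiplicity) be well defined for the a priori non-symmetric matrix $AB$. This is not a genuine obstacle: since $AB$ is similar to the symmetric matrix $S$, it is diagonalizable with real spectrum, so its algebraic and geometric multiplicities agree and its inertia is unambiguous. I expect the only mildly delicate step to be stating the existence and symmetry of $B^{1/2}$ cleanly; everything downstream is bookkeeping through similarity and Sylvester's law.
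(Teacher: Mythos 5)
Your proposal is correct and takes essentially the same route as the paper's proof: both observe that $AB$ is similar to $B^{1/2}AB^{1/2}$, deduce real eigenvalues from the symmetry of this matrix, and obtain the inertia claim from its congruence to $A$ via Sylvester's law (the paper cites Horn and Johnson, Theorem 4.5.8, for exactly this step). Your closing remark on the diagonalizability of $AB$ is a harmless extra precaution, not a divergence in method.
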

\begin{proof}
$AB$ is similar to and thus has the same eigenvalues as $B^{\frac{1}{2}}AB^{\frac{1}{2}}$, which is symmetric and has real eigenvalues. Since $B^{\frac{1}{2}}AB^{\frac{1}{2}}$ is congruent to $A$, they have the same number of positive, negative and zero eigenvalues (see \citet[Theorem 4.5.8]{horn_johnson_2013}).
\end{proof}
\begin{prop}
Assume that $\mathbf{P}_1$ and $\mathbf{P}_2$ are positive definite. The Jacobian of (\ref{equ:precond2}) has only real eigenvalues at fixed points. With a suitable learning rate, all strictly stable fixed points of (\ref{equ:precond2}) are local minimax, and all local minimax are stable fixed points of (\ref{equ:precond2}).
\end{prop}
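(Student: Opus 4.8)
The plan is to follow the proof of Theorem~\ref{thm:main} almost verbatim, isolating the single place where the preconditioners enter and discharging it with Proposition~\ref{prop:eigensign}. First I would pin down the fixed points: the left factor $\left[\begin{smallmatrix}\iden & \\ -\hessian_{\y\y}^{-1}\hessian_{\y\x} & \iden\end{smallmatrix}\right]$ is lower triangular with identity diagonal blocks, hence invertible, and $\mathbf{P}_1,\mathbf{P}_2$ are invertible by positive definiteness; so the fixed points of (\ref{equ:precond2}) are exactly the stationary points $\nabla f(\x,\y)=0$, i.e. the first-order condition.

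Next I would compute the Jacobian $\jacobian$ at a fixed point $(\x^*,\y^*)$. Since $\nabla f=0$ there, the derivative of the $\hessian_{\y\y}^{-1}\hessian_{\y\x}$ factor is annihilated by the vanishing gradient, so $\jacobian=\iden-T\,Dv$ with $T=\left[\begin{smallmatrix}\iden & \\ -\hessian_{\y\y}^{-1}\hessian_{\y\x} & \iden\end{smallmatrix}\right]$ and $Dv=\left[\begin{smallmatrix}\eta_{\x}\mathbf{P}_1\hessian_{\x\x} & \eta_{\x}\mathbf{P}_1\hessian_{\x\y}\\ -\eta_{\y}\mathbf{P}_2\hessian_{\y\x} & -\eta_{\y}\mathbf{P}_2\hessian_{\y\y}\end{smallmatrix}\right]$. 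Conjugating by $T^{-1}$ yields $T^{-1}\jacobian T=\iden-(Dv)\,T$, and the identity $\mathbf{P}_2\hessian_{\y\y}\hessian_{\y\y}^{-1}\hessian_{\y\x}=\mathbf{P}_2\hessian_{\y\x}$ kills the bottom-left block, leaving $(Dv)T$ block upper triangular with diagonal blocks $\eta_{\x}\mathbf{P}_1 M$ and $-\eta_{\y}\mathbf{P}_2\hessian_{\y\y}$, where $M:=\hessian_{\x\x}-\hessian_{\x\y}\hessian_{\y\y}^{-1}\hessian_{\y\x}$. Thus the spectrum of $\jacobian$ is the union of the spectra of $\iden-\eta_{\x}\mathbf{P}_1 M$ and $\iden+\eta_{\y}\mathbf{P}_2\hessian_{\y\y}$.

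This is exactly where the preconditioned case departs from Theorem~\ref{thm:main}, and it is the only genuine obstacle: there the diagonal blocks were symmetric, so realness and the signs of their eigenvalues were immediate, whereas $\mathbf{P}_1 M$ (with $M$ symmetric and $\mathbf{P}_1$ symmetric positive definite) is no longer symmetric. I would resolve this with Proposition~\ref{prop:eigensign}: $\mathbf{P}_1 M$ is similar to $M\mathbf{P}_1$, to which the proposition applies, so its eigenvalues are real and share the inertia (the counts of positive, negative and zero eigenvalues) of $M$; likewise $\mathbf{P}_2\hessian_{\y\y}$ has real eigenvalues matching the inertia of $\hessian_{\y\y}$. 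This establishes the realness claim for $\jacobian$ at once.

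Finally I would close both directions as in Theorem~\ref{thm:main}. Choosing $\eta_{\x},\eta_{\y}$ small enough that $\eta_{\x}\rho(\mathbf{P}_1 M)<2$ and $\eta_{\y}\rho(\mathbf{P}_2\hessian_{\y\y})<2$ keeps every eigenvalue of $\jacobian$ strictly above $-1$. If $(\x^*,\y^*)$ is a local minimax, the necessary condition (Proposition~\ref{prop:sec-necessary}) gives $\hessian_{\y\y}\preccurlyeq\zero$ and $M\succcurlyeq\zero$; by inertia preservation the eigenvalues of $\mathbf{P}_1 M$ are nonnegative and those of $\mathbf{P}_2\hessian_{\y\y}$ nonpositive, so every eigenvalue of $\jacobian$ lies in $(-1,1]$ and the point is stable. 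Conversely, $\rho(\jacobian)<1$ forces every eigenvalue strictly below $1$, i.e. $\mathbf{P}_1 M$ has only positive and $\mathbf{P}_2\hessian_{\y\y}$ only negative eigenvalues; inertia preservation then gives $M\succ\zero$ and $\hessian_{\y\y}\prec\zero$, and the sufficient condition (Proposition~\ref{prop:sec-sufficient}) certifies that $(\x^*,\y^*)$ is a local minimax.
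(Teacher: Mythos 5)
Your proposal is correct and follows essentially the same route as the paper's own proof: the identical similarity transformation by $\left[\begin{smallmatrix}\iden & \\ \hessian_{\y\y}^{-1}\hessian_{\y\x} & \iden\end{smallmatrix}\right]$ reduces the Jacobian at a fixed point to block upper-triangular form with diagonal blocks $\iden-\eta_{\x}\mathbf{P}_1\left(\hessian_{\x\x}-\hessian_{\x\y}\hessian_{\y\y}^{-1}\hessian_{\y\x}\right)$ and $\iden+\eta_{\y}\mathbf{P}_2\hessian_{\y\y}$, and the same inertia lemma (Proposition~\ref{prop:eigensign}) delivers realness of the spectrum and the sign transfer needed for both directions under the same small-learning-rate condition. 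Your added details --- noting that $\mathbf{P}_1 M$ is similar to $M\mathbf{P}_1$ so the lemma applies in the stated order, and that the derivative of the correction factor is annihilated at fixed points because $\nabla f=0$ there --- merely make explicit steps the paper leaves implicit.
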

\begin{proof}
First, observe that both $\left[\begin{matrix}
\iden & \\ -\hessian_{\y\y}^{-1}\hessian_{\y\x} & \iden
\end{matrix}\right]$ and $\left[\begin{matrix}
\mathbf{P}_1 & \\ & \mathbf{P}_2
\end{matrix}\right]$ are both always invertible. Hence fixed points of (\ref{equ:precond2}) are exactly stationary points. Let $c:=\eta_{\y}/\eta_{\x}$. Note that the Jacobian of (\ref{equ:precond2}) is given by
\begin{equation*}
\jacobian=\iden-\eta_{\x}\left[\begin{matrix}
\iden&\\
-\hessian_{\y\y}^{-1}\hessian_{\y\x}& \iden
\end{matrix}\right]\left[\begin{matrix}
\mathbf{P}_1 & \\ & \mathbf{P}_2
\end{matrix}\right]\left[\begin{matrix}
\hessian_{\x\x}& \hessian_{\x\y} \\ -c \hessian_{\y\x} & -c \hessian_{\y\y}
\end{matrix}\right],
\end{equation*}
which is similar to
\begin{align*}
&\left[\begin{matrix}
\iden&\\
\hessian_{\y\y}^{-1} \hessian_{\y\x}& \iden
\end{matrix}\right] \jacobian \left[\begin{matrix}
\iden&\\
-\hessian_{\y\y}^{-1} \hessian_{\y\x}& \iden
\end{matrix}\right]\\
=&\iden-\eta_{\x}\left[\begin{matrix}
\mathbf{P}_1 & \\ & \mathbf{P}_2
\end{matrix}\right]\left[\begin{matrix}
\hessian_{\x\x}-\hessian_{\x\y}\hessian_{\y\y}^{-1}\hessian_{\y\x}& \hessian_{\x\y} \\  & -c \hessian_{\y\y}
\end{matrix}\right].
\end{align*}
Therefore the eigenvalues of $\jacobian$ are exactly those of $\iden-\eta_{\x}\mathbf{P}_1\left(\hessian_{\x\x}-\hessian_{\x\y}\hessian_{\y\y}^{-1}\hessian_{\y\x}\right)$ and $\iden+\eta_{\y}\mathbf{P}_2\hessian_{\y\y}$. By Proposition~\ref{prop:eigensign}, the eigenvalues of both matrices are all real. When the learning rates are small enough, \ie, when
\begin{equation*}
    \eta_{\x}<\frac{2}{\max\left\{\rho\left(\mathbf{P}_1(\hessian_{\x\x}-\hessian_{\x\y}\hessian_{\y\y}^{-1}\hessian_{\y\x})\right), c\rho(-\mathbf{P}_2\hessian_{\y\y})\right\}},
\end{equation*}
whether $\rho(\jacobian)\le 1$ solely depends on whether $\mathbf{P}_1\left(\hessian_{\x\x}-\hessian_{\x\y}\hessian_{\y\y}^{-1}\hessian_{\y\x}\right)$ and $-\mathbf{P}_2\hessian_{\y\y}$ have negative eigenvalues. By Proposition~\ref{prop:eigensign}, the number of positive, negative and zero eigenvalues of the two matrices are the same as those of $\hessian_{\x\x}-\hessian_{\x\y}\hessian_{\y\y}^{-1}\hessian_{\y\x}$ and $-\hessian_{\y\y}$ respectively. Therefore the proposition follows from the same argument as in Theorem 1.
\end{proof}

\subsection{General-sum Stackelberg Games}
\label{sec:nonzerosum}
A general-sum Stackelberg game is formulated as follows. There is a leader, whose action is $\x\in\R^n$, and a follower, whose action is $\y\in\R^m$. The leader's cost function is given by $f(\x,\y)$ while the follower's is given by $g(\x,\y)$. The generalization of minimax in general-sum Stackelberg games is \emph{Stackelberg equilibrium}.
\begin{definition}[Stackelberg equilibrium]
$(\x^*,\y^*)$ is a (global) Stackelberg equilibrium if $\y^*\in R(\x^*)$, and $\forall \x\in \mathcal{X}$,
\begin{equation*}
    f(\x^*,\y^*)\le \max_{\y\in R(\x)} f(\x,\y),   
\end{equation*}
where $R(\x):=\arg\min g(\x,\cdot)$ is the best response set for the follower.
\end{definition}
Similarly, we have local Stackelberg equilibrium~\citep{fiez2019convergence} defined as follows.\footnote{Our definition is slightly different from that in~\citet{fiez2019convergence}}
\begin{definition}[Local Stackelberg equilibrium]
$(\x^*,\y^*)$ is a local Stackelberg equilibrium if 
\begin{enumerate}
    \item $\y^*$ is a local minimum of $g(\x^*,\cdot)$;
    \item Let $r(\x)$ be the implicit function defined by $\nabla_{\y}g(\x,\y)=0$ in a neighborhood of $\x^*$ with $r(\x^*)=\y^*$. Then $\x^*$ is a local minimum of $\phi(\x):=f(\x,r(\x))$.
\end{enumerate}
\end{definition}
For local Stackelberg equilibrium, we have similar necessary conditions and sufficient conditions. For simplicity, we use the following notation when it is clear from the context
\begin{equation*}
    \nabla^2f(\x,\y)=\left[\begin{matrix}
    \hessian_{\x\x}&\hessian_{\x\y}\\
    \hessian_{\y\x}&\hessian_{\y\y}
    \end{matrix}\right],~\nabla^2g(\x,\y)=\left[\begin{matrix}
    \GG_{\x\x}&\GG_{\x\y}\\
    \GG_{\y\x}&\GG_{\y\y}
    \end{matrix}\right].
\end{equation*}

Similar to the zero-sum case, local Stackelberg equilibrium can be characterized using derivatives. 

\begin{prop}[Necessary conditions]
Any local Stackelberg equilibrium satisfies $\nabla_{\y}g(\x,\y)=0$, $\nabla_{\x}f(\x,\y)-\GG_{\x\y}\GG_{\y\y}^{-1}\nabla_{\y}f(\x,\y)=0$, $\nabla^2_{\y\y}g(\x,\y)\succcurlyeq 0$ and 
\begin{equation*}
    \hessian_{\x\x}-\hessian_{\x\y}\GG_{\y\y}^{-1}\GG_{\y\x}-\nabla_{\x}\left(\GG_{\x\y}\GG_{\y\y}^{-1}\nabla_{\y} f\right)+\nabla_{\y}\left(\GG_{\x\y}\GG_{\y\y}^{-1}\nabla_{\y}f\right)\GG_{\y\y}^{-1}\GG_{\y\x}\succcurlyeq 0.
\end{equation*}
\end{prop}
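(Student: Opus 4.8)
The plan is to translate the two defining properties of a local Stackelberg equilibrium into first- and second-order conditions, using the implicit best response $r(\x)$ as the bridge. Recall the two clauses of the Definition: (i) $\y^*$ is a local minimum of $g(\x^*,\cdot)$, and (ii) $\x^*$ is a local minimum of $\phi(\x):=f(\x,r(\x))$. Applying the standard first- and second-order necessary conditions for an unconstrained local minimum to clause (i) immediately yields $\nabla_{\y}g(\x^*,\y^*)=0$ and $\nabla^2_{\y\y}g(\x^*,\y^*)\succcurlyeq 0$, which are exactly the first and third claimed conditions. The remaining work is to extract the first two derivatives of $\phi$ at $\x^*$.

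First I would pin down the derivative of the implicit function. Since $\nabla^2_{\y\y}g$ is invertible by assumption and $\succcurlyeq 0$ (hence $\succ 0$), the implicit function theorem guarantees that $r(\cdot):\R^n\to\R^m$ is well-defined and differentiable near $\x^*$; differentiating the identity $\nabla_{\y}g(\x,r(\x))=0$ gives $\GG_{\y\x}+\GG_{\y\y}\nabla_{\x}r=0$, so $\nabla_{\x}r=-\GG_{\y\y}^{-1}\GG_{\y\x}$. Then I would apply the chain rule to $\phi$: its gradient is $\nabla\phi(\x)=\nabla_{\x}f+(\nabla_{\x}r)^\top\nabla_{\y}f$, and using $(\nabla_{\x}r)^\top=-\GG_{\x\y}\GG_{\y\y}^{-1}$ (from $\GG_{\y\x}^\top=\GG_{\x\y}$ and symmetry of $\GG_{\y\y}$) this becomes $\nabla\phi(\x)=\nabla_{\x}f-\GG_{\x\y}\GG_{\y\y}^{-1}\nabla_{\y}f$. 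The first-order condition $\nabla\phi(\x^*)=0$ is precisely the second claimed condition.

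For the Hessian I would differentiate $\nabla\phi$ once more, treating each quantity as a function of $\x$ evaluated along $\y=r(\x)$ and using the total-derivative rule $\tfrac{d}{d\x}F(\x,r(\x))=\nabla_{\x}F-\nabla_{\y}F\,\GG_{\y\y}^{-1}\GG_{\y\x}$. The term $\nabla_{\x}f$ contributes $\hessian_{\x\x}+\hessian_{\x\y}\nabla_{\x}r=\hessian_{\x\x}-\hessian_{\x\y}\GG_{\y\y}^{-1}\GG_{\y\x}$, while the term $-\GG_{\x\y}\GG_{\y\y}^{-1}\nabla_{\y}f$ contributes $-\nabla_{\x}\!\left(\GG_{\x\y}\GG_{\y\y}^{-1}\nabla_{\y}f\right)+\nabla_{\y}\!\left(\GG_{\x\y}\GG_{\y\y}^{-1}\nabla_{\y}f\right)\GG_{\y\y}^{-1}\GG_{\y\x}$. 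Summing, $\nabla^2\phi(\x^*)$ equals exactly the matrix in the fourth condition, and the second-order necessary condition $\nabla^2\phi(\x^*)\succcurlyeq 0$ completes the proof.

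The main obstacle is the careful bookkeeping in the Hessian computation. Unlike the zero-sum case, where full stationarity $\nabla f(\x^*,\y^*)=0$ forces $\nabla_{\y}f=0$ and thereby annihilates every term involving $\nabla_{\y}f$, here only $\nabla_{\y}g(\x^*,\y^*)=0$ holds and $\nabla_{\y}f(\x^*,\y^*)$ is generally nonzero. Consequently the term $-\GG_{\x\y}\GG_{\y\y}^{-1}\nabla_{\y}f$ survives and must be differentiated through both its explicit $\x$-dependence and its dependence through $r(\x)$, which is what produces the last two terms of the fourth condition. Keeping the partial derivatives $\nabla_{\x}(\cdot)$ and $\nabla_{\y}(\cdot)$ of the composite expression $\GG_{\x\y}\GG_{\y\y}^{-1}\nabla_{\y}f$ distinct from its total derivative, and correctly transposing $\nabla_{\x}r$, is where an error is most likely to creep in.
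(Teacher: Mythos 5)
Your proposal is correct and takes essentially the same route as the paper: both arguments reduce the claim to the standard first- and second-order necessary conditions for the two local minimization problems in the definition, obtain $\nabla_{\x} r=-\GG_{\y\y}^{-1}\GG_{\y\x}$ from the implicit function theorem, and expand $\nabla\phi$ and $\nabla^2\phi$ by the chain rule, which is exactly the displayed computation of $\nabla^2\phi(\x)$ the paper gives. Your careful separation of the partial derivatives $\nabla_{\x}(\GG_{\x\y}\GG_{\y\y}^{-1}\nabla_{\y}f)$ and $\nabla_{\y}(\GG_{\x\y}\GG_{\y\y}^{-1}\nabla_{\y}f)$ from the total derivative, with the $\nabla_{\y}f$ terms surviving because only $\nabla_{\y}g$ vanishes, matches the paper's bookkeeping exactly.
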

\begin{prop}[Sufficient conditions]
If $(\x,\y)$ satisfy $\nabla_{\y}g(\x,\y)=0$, $\nabla_{\x}f(\x,\y)-\GG_{\x\y}\GG_{\y\y}^{-1}\nabla_{\y}f(\x,\y)=0$, $\nabla^2_{\y\y}g(\x,\y)\succ 0$ and 
\begin{equation*}
    \hessian_{\x\x}-\hessian_{\x\y}\GG_{\y\y}^{-1}\GG_{\y\x}-\nabla_{\x}\left(\GG_{\x\y}\GG_{\y\y}^{-1}\nabla_{\y} f\right)+\nabla_{\y}\left(\GG_{\x\y}\GG_{\y\y}^{-1}\nabla_{\y}f\right)\GG_{\y\y}^{-1}\GG_{\y\x}\succ 0.
\end{equation*}
then $(\x,\y)$ is a local Stackelberg equilibrium.
\end{prop}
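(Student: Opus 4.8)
The plan is to verify directly the two defining properties in the definition of a local Stackelberg equilibrium. Property~(1), that $\y^*$ is a local minimum of $g(\x^*,\cdot)$, is immediate: the hypotheses $\nabla_\y g(\x^*,\y^*)=0$ and $\GG_{\y\y}=\nabla^2_{\y\y}g(\x^*,\y^*)\succ 0$ are exactly the first- and second-order sufficient conditions for a strict local minimum of $\y\mapsto g(\x^*,\y)$. All remaining work is in establishing property~(2), that $\x^*$ is a local minimum of $\phi(\x)=f(\x,r(\x))$.

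First I would set up the ridge via the implicit function theorem. Since $\nabla_\y g(\x^*,\y^*)=0$ and $\GG_{\y\y}$ is invertible (being positive definite), there is a neighborhood of $\x^*$ on which a continuously differentiable map $r(\x)$ is uniquely determined by $\nabla_\y g(\x,r(\x))=0$ with $r(\x^*)=\y^*$. Differentiating this identity in $\x$ yields $\GG_{\y\x}+\GG_{\y\y}\nabla_\x r=0$, hence $\nabla_\x r(\x^*)=-\GG_{\y\y}^{-1}\GG_{\y\x}$; this is the only place positive-definiteness of $\GG_{\y\y}$ is needed to make the ridge well-defined.

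Next I would compute the first two derivatives of $\phi$ along the ridge. By the chain rule $\nabla\phi(\x)=\nabla_\x f+(\nabla_\x r)^\top\nabla_\y f=\nabla_\x f-\GG_{\x\y}\GG_{\y\y}^{-1}\nabla_\y f$, using $\GG_{\y\x}^\top=\GG_{\x\y}$ and the symmetry of $\GG_{\y\y}$; evaluated at $\x^*$ this is precisely the second hypothesis, so $\nabla\phi(\x^*)=0$ and $\x^*$ is stationary for $\phi$. For the Hessian I would differentiate $\nabla\phi$ once more as a vector field restricted to the ridge, so that the total derivative of any block $\psi(\x,r(\x))$ is $\nabla_\x\psi-(\nabla_\y\psi)\GG_{\y\y}^{-1}\GG_{\y\x}$. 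The piece $\nabla_\x f$ then contributes $\hessian_{\x\x}-\hessian_{\x\y}\GG_{\y\y}^{-1}\GG_{\y\x}$, while the piece $-\GG_{\x\y}\GG_{\y\y}^{-1}\nabla_\y f$ contributes $-\nabla_\x\big(\GG_{\x\y}\GG_{\y\y}^{-1}\nabla_\y f\big)+\nabla_\y\big(\GG_{\x\y}\GG_{\y\y}^{-1}\nabla_\y f\big)\GG_{\y\y}^{-1}\GG_{\y\x}$. Summing these reproduces exactly the matrix in the fourth hypothesis, which is assumed $\succ 0$; together with $\nabla\phi(\x^*)=0$ this makes $\x^*$ a strict local minimum of $\phi$, establishing property~(2) and hence the proposition.

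I expect the main obstacle to be the bookkeeping in the Hessian step: one must apply the chain rule correctly to the vector field $\GG_{\x\y}\GG_{\y\y}^{-1}\nabla_\y f$ restricted to the ridge, track which partials act through $r(\x)$ and which do not, and confirm the matrix shapes ($\nabla_\y(\cdot)$ is $n\times m$ and is multiplied by the $m\times n$ factor $\GG_{\y\y}^{-1}\GG_{\y\x}$). This step also implicitly requires $f$ and $g$ to be twice differentiable in general and thrice differentiable at the critical point, so that $\nabla^2\phi$ exists and the displayed third-derivative terms are meaningful---consistent with the smoothness assumed throughout the paper. Once the algebra is arranged so that the computed $\nabla^2\phi(\x^*)$ is literally the displayed expression, positive definiteness closes the argument.
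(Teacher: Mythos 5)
Your proposal is correct and takes essentially the same route as the paper: the paper also derives the proposition directly from the definition of local Stackelberg equilibrium, using the implicit function theorem for $r(\x)$ and the identical chain-rule computation showing that $\nabla^2\phi(\x)$ equals the displayed matrix. Your write-up merely fills in the bookkeeping the paper leaves implicit, namely the first-order identity $\nabla\phi(\x)=\nabla_{\x}f-\GG_{\x\y}\GG_{\y\y}^{-1}\nabla_{\y}f$ and the standard second-order sufficiency step for a strict local minimum of $\phi$.
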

The conditions above can be derived from the definition with the observation that
\begin{equation*}
\begin{aligned}
    \nabla^2\phi(\x)&=\nabla \left(\nabla_{\x}f(\x,r(\x))^\top-\nabla_{\y}f(\x,r(\x))^\top\nabla r(\x)\right)\\
    &=\nabla^2_{\x\x}f+\nabla^2_{\x\y}f\nabla r(\x)+\nabla_{\x}\left(\nabla_{\y}f^\top\nabla r(\x)\right)+\nabla_{\y}\left(\nabla_{\y}f^\top\nabla r(\x)\right)\nabla r(\x)\\
    &=\hessian_{\x\x}-\hessian_{\x\y}\GG_{\y\y}^{-1}\GG_{\y\x}-\nabla_{\x}\left(\GG_{\x\y}\GG_{\y\y}^{-1}\nabla_{\y} f\right)+\nabla_{\y}\left(\GG_{\x\y}\GG_{\y\y}^{-1}\nabla_{\y}f\right)\GG_{\y\y}^{-1}\GG_{\y\x}.
\end{aligned}
\end{equation*}
Here all derivatives are evaluated at $(\x,r(\x))$. We would like to clarify that by $\nabla_{\x}h$, where $h:\R^{n+m}\to\R$, we mean the partial derivative of $h$ for the first $n$ entries. Similarly for $h:\R^{n+m}\to\R^k$, by $\nabla_{\x}h$ we mean the first $n$ columns of the Jacobian of $h$, which is $k$-by-($n+m$).

Henceforth we will use $D_{\x}f(\x,\y)$ to denote $\nabla_{\x}f-\GG_{\x\y}\GG_{\y\y}^{-1}\nabla_{\y}f(\x,\y)$. The general-sum version of Follow-the-Ridge is given by
\begin{align}
\label{equ:nonzerosum}
    \left[\begin{matrix}
    \x_{t+1}\\ \y_{t+1}
    \end{matrix}\right]\gets \left[\begin{matrix}
    \x_{t}\\ \y_{t}
    \end{matrix}\right]- \left[\begin{matrix}
    \iden&\\
    -\GG_{\y\y}^{-1}\GG_{\y\x}& \iden
    \end{matrix}\right]\left[\begin{matrix}
    \eta_{\x}D_{\x} f(\x_t,\y_t)\\ \eta_{\y}\nabla_{\y} g(\x_t,\y_t)
    \end{matrix}\right].
\end{align}
Just as the zero-sum version of FR converges exactly to local minimax, we can show that the general-sum version of FR converges exactly to local Stackelberg equilibria. As in the zero-sum setting, we use stability of fixed points as a proxy of discussing local convergence. %
\begin{theorem}
The Jacobian of (\ref{equ:nonzerosum}) has only real eigenvalues at fixed points. With a suitable learning rate, all strictly stable fixed points of (\ref{equ:nonzerosum}) are local Stackelberg equilibria, and all local Stackelberg equilibria are stable fixed points of (\ref{equ:nonzerosum}).
\end{theorem}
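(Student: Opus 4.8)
The plan is to follow the proof of Theorem~1 almost verbatim, substituting the general-sum blocks for the zero-sum Hessian blocks and invoking the necessary/sufficient conditions for local Stackelberg equilibria stated above. Write the update (\ref{equ:nonzerosum}) as $\z_{t+1}=\z_t-\mathbf{L}(\z_t)\,\mathbf{v}(\z_t)$, where $\mathbf{L}=\left[\begin{smallmatrix}\iden&\\-\GG_{\y\y}^{-1}\GG_{\y\x}&\iden\end{smallmatrix}\right]$ and $\mathbf{v}=[\eta_{\x}D_{\x}f,\ \eta_{\y}\nabla_{\y}g]^{\top}$. Since $\mathbf{L}$ is lower-triangular with identity diagonal it is always invertible, so the fixed points are exactly the points satisfying $D_{\x}f=0$ and $\nabla_{\y}g=0$, which are precisely the first-order necessary conditions. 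First I would compute the Jacobian at such a fixed point: by the product rule $\nabla(\mathbf{L}\mathbf{v})=(\nabla\mathbf{L})\mathbf{v}+\mathbf{L}\,\nabla\mathbf{v}$, and because $\mathbf{v}=0$ there, the term differentiating $\mathbf{L}$ vanishes, leaving $\jacobian=\iden-\mathbf{L}\mathbf{Q}$ with $\mathbf{Q}=\nabla\mathbf{v}=\left[\begin{smallmatrix}\eta_{\x}\nabla_{\x}(D_{\x}f)&\eta_{\x}\nabla_{\y}(D_{\x}f)\\ \eta_{\y}\GG_{\y\x}&\eta_{\y}\GG_{\y\y}\end{smallmatrix}\right]$.

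Next, exactly as in Theorem~1, I would conjugate by $\mathbf{L}^{-1}$ to triangularize: $\mathbf{L}^{-1}\jacobian\,\mathbf{L}=\iden-\mathbf{Q}\mathbf{L}$. A direct computation shows the bottom-left block of $\mathbf{Q}\mathbf{L}$ is $\eta_{\y}(\GG_{\y\x}-\GG_{\y\y}\GG_{\y\y}^{-1}\GG_{\y\x})=0$, so $\iden-\mathbf{Q}\mathbf{L}$ is block upper-triangular with diagonal blocks $\iden-\eta_{\x}\bigl(\nabla_{\x}(D_{\x}f)-\nabla_{\y}(D_{\x}f)\GG_{\y\y}^{-1}\GG_{\y\x}\bigr)$ and $\iden-\eta_{\y}\GG_{\y\y}$. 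The crucial identity is that the inner matrix of the first block is precisely $\nabla^{2}\phi(\x^*)$: since $\nabla r=-\GG_{\y\y}^{-1}\GG_{\y\x}$ by the implicit function theorem and $\nabla\phi(\x)=D_{\x}f(\x,r(\x))$, the chain rule gives $\nabla^{2}\phi=\nabla_{\x}(D_{\x}f)-\nabla_{\y}(D_{\x}f)\GG_{\y\y}^{-1}\GG_{\y\x}$, which is exactly the expression appearing in the necessary/sufficient conditions. Hence the eigenvalues of $\jacobian$ are those of $\iden-\eta_{\x}\nabla^{2}\phi$ together with those of $\iden-\eta_{\y}\GG_{\y\y}$, and since both $\nabla^{2}\phi$ (a genuine Hessian, hence symmetric) and $\GG_{\y\y}$ are symmetric, all eigenvalues are real, which settles the first claim.

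Finally, the stability analysis mirrors Theorem~1. Choosing $\eta_{\x}<2/\rho(\nabla^{2}\phi)$ and $\eta_{\y}<2/\rho(\GG_{\y\y})$ guarantees both diagonal blocks satisfy $\iden-\eta_{\x}\nabla^{2}\phi\succ-\iden$ and $\iden-\eta_{\y}\GG_{\y\y}\succ-\iden$, so whether $\rho(\jacobian)\le 1$ reduces entirely to whether $\nabla^{2}\phi$ or $\GG_{\y\y}$ has negative eigenvalues. If $(\x^*,\y^*)$ is a local Stackelberg equilibrium, the necessary conditions give $\GG_{\y\y}\succcurlyeq\zero$ and $\nabla^{2}\phi\succcurlyeq\zero$, placing every eigenvalue of $\jacobian$ in $(-1,1]$, so it is a stable fixed point; conversely $\rho(\jacobian)<1$ forces $\GG_{\y\y}\succ\zero$ and $\nabla^{2}\phi\succ\zero$, which are exactly the sufficient conditions for local Stackelberg equilibrium. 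I expect the main obstacle to be the bookkeeping in identifying the top-left diagonal block with $\nabla^{2}\phi$: one must carry the full Jacobians $\nabla_{\x}(D_{\x}f)$ and $\nabla_{\y}(D_{\x}f)$ through the product rule — noting the $\mathbf{L}$-derivative drops out only because we evaluate at a fixed point — and then recognize that, although the resulting matrix is not manifestly symmetric from its algebraic form, it coincides with a true Hessian and is therefore symmetric, which is the fact that ultimately delivers the real eigenvalues.
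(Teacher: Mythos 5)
Your proposal is correct and takes essentially the same route as the paper's proof: the invertibility of $\left[\begin{smallmatrix}\iden&\\-\GG_{\y\y}^{-1}\GG_{\y\x}&\iden\end{smallmatrix}\right]$ identifies fixed points with the first-order conditions, the same conjugation block-triangularizes the Jacobian, and your top-left block $\nabla_{\x}(D_{\x}f)-\nabla_{\y}(D_{\x}f)\GG_{\y\y}^{-1}\GG_{\y\x}$ is exactly the paper's $\widetilde{\hessian}_{\x\x}$, which it likewise equates with $\nabla^{2}\phi$ to get symmetry and hence real eigenvalues, before the identical learning-rate bound and necessary/sufficient conditions finish both directions. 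Your explicit product-rule remark that the $(\nabla\mathbf{L})\mathbf{v}$ term vanishes at fixed points just makes precise a step the paper leaves implicit.
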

\begin{proof}
This theorem only analyzes the Jacobian of (\ref{equ:nonzerosum}) \textbf{at fixed points}; thus we will only need to focus on the fixed points of (\ref{equ:nonzerosum}) in the proof. 

Let $c:=\eta_{\y}/\eta_{\x}$. Note that $\left[\begin{matrix}
\iden&\\
-\GG_{\y\y}^{-1}\GG_{\y\x}& \iden
\end{matrix}\right]$ is always invertible. Therefore, the fixed points of~(\ref{equ:nonzerosum}) are exactly points $(\x,\y)$ that satisfy $D_{\x}f(\x,\y)=0$ and $\nabla_{\y}g(\x,\y)=0$, i.e. the first-order necessary condition for local Stackelberg equilibria.

In particular, consider a \textbf{fixed point} $\z^* := (\x,\y)$. The Jacobian of~(\ref{equ:nonzerosum}) at $(\x,\y)$ is given by
\begin{equation*}
    \jacobian=\iden-\eta_{\x}\left[\begin{matrix}
    \iden&\\
    -\GG_{\y\y}^{-1}\GG_{\y\x}& \iden
    \end{matrix}\right]\left[\begin{matrix}
    \hessian_{\x\x}-\nabla_{\x}(\GG_{\x\y}\GG_{\y\y}^{-1}\nabla_{\y}f)& \hessian_{\x\y}-\nabla_{\y}(\GG_{\x\y}\GG_{\y\y}^{-1}\nabla_{\y}f) \\ c \GG_{\y\x} & c \GG_{\y\y}
    \end{matrix}\right].
\end{equation*}
Observe that
\begin{align*}
    &\left[\begin{matrix}
    \iden&\\ \GG_{\y\y}^{-1}\GG_{\y\x}&\iden
    \end{matrix}\right]\jacobian\left[\begin{matrix}
    \iden&\\-\GG_{\y\y}^{-1}\GG_{\y\x}&\iden
    \end{matrix}\right]\\
    =&\iden-\eta_{\x}\left[\begin{matrix}
    \hessian_{\x\x}-\nabla_{\x}(\GG_{\x\y}\GG_{\y\y}^{-1}\nabla_{\y}f)& \hessian_{\x\y}-\nabla_{\y}(\GG_{\x\y}\GG_{\y\y}^{-1}\nabla_{\y}f) \\ c \GG_{\y\x} & c\GG_{\y\y}
    \end{matrix}\right]\left[\begin{matrix}
    \iden&\\-\GG_{\y\y}^{-1}\GG_{\y\x}&\iden
    \end{matrix}\right]\\
    =&\iden-\eta_{\x}\left[\begin{matrix}
    \hessian_{\x\x}- \hessian_{\x\y}\GG_{\y\y}^{-1}\GG_{\y\x}- \nabla_{\x}\left(\Box\right)+ \nabla_{\y}\left(\Box\right)\GG_{\y\y}^{-1}\GG_{\y\x} & \hessian_{\x\y}-\nabla_{\y}(\Box) \\
    0 & c\GG_{\y\y}
    \end{matrix}\right],
\end{align*}
where $\Box$ is a shorthand for $\GG_{\x\y}\GG_{\y\y}^{-1}\nabla_{\y}f$. Let
$$\widetilde{\hessian}_{\x\x}:=\hessian_{\x\x}- \hessian_{\x\y}\GG_{\y\y}^{-1}\GG_{\y\x}- \nabla_{\x}\left(\Box\right)+ \nabla_{\y}\left(\Box\right)\GG_{\y\y}^{-1}\GG_{\y\x}.$$
We can now see that the eigenvalues of $\jacobian$ are exactly those of $\iden-\eta_{\x}\widetilde{\hessian}_{\x\x}$ and those of $\iden-\eta_{\y}\GG_{\y\y}.$ It follows that all eigenvalues of $\jacobian$ are real.\footnote{$\widetilde{\hessian}_{\x\x}$ is always symmetric.} Suppose that $$\eta_{\x}<\frac{2}{\max\{\rho(\widetilde{\hessian}_{\x\x}), c\rho\left(\GG_{\y\y}\right)\}}.$$
In that case, if $(\x,\y)$ is a local Stackelberg equilibrium, then from the second-order necessary condition, both $\widetilde{\hessian}_{\x\x}$ and $\GG_{\y\y}$ are positive semidefinite. As a result, all eigenvalues of $\jacobian$ would be in $(-1,1]$. By Definition~\ref{def:stable-fixed-points}, this suggests that $(\x,\y)$ is a stable fixed point.

On the other hand, if $(\x,\y)$ is a strictly stable fixed point, then all eigenvalues of $\jacobian$ fall in $(-1,1)$, which suggests that $\widetilde{\hessian}_{\x\x}\succ 0$ and $\GG_{\y\y}\succ 0$. By the sufficient condition, $(\x,\y)$ is a local Stackelberg equilibrium. %
\end{proof}

\begin{rem}
From the proof above, it is can be seen that ``strict local Stackelberg equilibria'', \ie points that satisfy the sufficient conditions, must be strictly stable fixed points of (\ref{equ:nonzerosum}). Then by Proposition~\ref{prop:local}, FR locally converges to such points. Thus, the set of points that FR locally converges to is the same as local Stackelberg equilibria, up to degenerate cases where the Jacobian spectral radius is exactly $1$.
\end{rem}

\section{Experimental Details}
\subsection{Low Dimensional Problems}
The algorithms we compared with are
\begin{equation*}
\begin{aligned}
    \left[\begin{matrix}\x_{t+1}\\ \y_{t+1}\end{matrix}\right]
    &\gets 
    \left[\begin{matrix}\x_{t}\\ \y_{t}\end{matrix}\right]
    -\eta \left[\begin{matrix}\nabla_{\x}f(\x_t,\y_t)\\ -\nabla_{\y}f(\x_t,\y_t)\end{matrix}\right],\qquad\quad &\text{(GDA)}\\
    \left[\begin{matrix}\x_{t+1}\\ \y_{t+1}\end{matrix}\right]
    &\gets 
    \left[\begin{matrix}\x_{t}\\ \y_{t}\end{matrix}\right]
    -2\eta \left[\begin{matrix}\nabla_{\x}f(\x_t,\y_t)\\ -\nabla_{\y}f(\x_t,\y_t)\end{matrix}\right]+
    \eta\left[\begin{matrix}\nabla_{\x}f(\x_{t-1},\y_{t-1})\\ -\nabla_{\y}f(\x_{t-1},\y_{t-1})\end{matrix}\right],\qquad\quad &\text{(OGDA)}\\
    \left[\begin{matrix}\x_{t+1}\\ \y_{t+1}\end{matrix}\right]
    &\gets 
    \left[\begin{matrix}\x_{t}\\ \y_{t}\end{matrix}\right]
    -\eta \left[\begin{matrix}\nabla_{\x}f(\x_t-\eta \nabla_{\x} f(\x_t,\y_t),\y_t+\eta \nabla_{\y} f(\x_t,\y_t))\\ -\nabla_{\y}f(\x_t-\eta \nabla_{\x} f(\x_t,\y_t),\y_t+\eta \nabla_{\y} f(\x_t,\y_t))\end{matrix}\right],\qquad\quad &\text{(EG)}\\
    \left[\begin{matrix}\x_{t+1}\\ \y_{t+1}\end{matrix}\right]
    &\gets 
    \left[\begin{matrix}\x_{t}\\ \y_{t}\end{matrix}\right]
    -\eta 
    \left[\begin{matrix}\iden & -\lambda \hessian_{\x\y}\\ \lambda\hessian_{\y\x} & \iden\end{matrix}\right]
    \left[\begin{matrix}\nabla_{\x}f(\x_t,\y_t)\\ -\nabla_{\y}f(\x_t,\y_t)\end{matrix}\right],\qquad\quad &\text{(SGA)}\\
    \left[\begin{matrix}\x_{t+1}\\ \y_{t+1}\end{matrix}\right]
    &\gets 
    \left[\begin{matrix}\x_{t}\\ \y_{t}\end{matrix}\right]
    -\eta \left[\begin{matrix}\nabla_{\x}f(\x_t,\y_t)\\ -\nabla_{\y}f(\x_t,\y_t)\end{matrix}\right]-\gamma\eta \nabla \left\Vert \nabla f(\x_t,\y_t)\right\Vert^2.\qquad\quad &\text{(CO)}\\
\end{aligned}
\end{equation*}
We used a learning rate of $\eta=0.05$ for all algorithms, $\lambda=1.0$ for SGA and $\gamma=0.1$ for CO. We did not find SGA with alignment~\citep{balduzzi2018mechanics} to be qualitatively different from SGA in our experiments.

\subsection{Mixture of Gaussian Experiment}\label{app:mog}
\textbf{Dataset.} The mixture of Gaussian dataset is composed of 5,000 points sampled independently from the following distribution $p_\data(x) = \frac{1}{3}\gaussian(-4, 0.01) + \frac{1}{3}\gaussian(0, 0.01) + \frac{1}{3}\gaussian(4, 0.01)$ where $\gaussian(\mu, \sigma^2)$ is the probability density function of a 1D-Gaussian distribution with mean $\mu$ and variance $\sigma^2$. The latent variables $\z \in \R^{16}$ are sampled from a standard Normal distribution $\gaussian(\mathbf{0}, \iden)$. Because we want to use full-batch methods, we sample 5,000 points that we re-use for each iteration during training. For the two-dimensional case, we generate the data from 9 Gaussians with $\mu_x \in \{-3, 0, 3\}$ and $\mu_y \in \{-3, 0, 3\}$. The covariance matrix is $0.01\iden$.

\textbf{Neural Networks Architecture.} Both the generator and discriminator are 2 hidden layer neural networks with 64 hidden units and Tanh activations.

\textbf{Other Hyperparameters.} For FR, we use conjugate gradient (CG) in the inner-loop to approximately invert the Hessian. In practice, we use 10 CG iterations (5 iterations also works well). Since the loss surface is highly non-convex (let alone quadratic), we add damping term to stabilize the training. Specifically, we follow Levenberg-Marquardt style heuristic adopted in~\citet{martens2010deep}. For both generator and discriminator, we use learning rate 0.0002. For consensus optimization (CO), we tune the gradient penalty coefficient using grid search over $\{0.01, 0.03, 0.1, 0.3, 1.0, 3.0, 10.0 \}$.

\subsection{MNIST Experiment}\label{app:MNIST}
\textbf{Dataset.} The dataset we used in our experiment only includes class 0 and 1. For each class, we take 4,800 training examples. Overall, we have 9,800 examples. The latent variables $\z \in \R^{64}$ are sampled from a standard Normal distribution $\gaussian(\mathbf{0}, \iden)$.

\textbf{Neural Networks Architecture.} Both the generator and discriminator are 2 hidden layer neural networks with 512 hidden units and Tanh activations. For each fully-connected layer, we use spectral normalization to stabilize training.

\textbf{Other Hyperparameters.} For FR, we use conjugate gradient (CG) in the inner-loop to approximately invert the Hessian. In practice, we use 5 CG iterations for computational consideration. We also use the same damping scheme as MOG experiment. For both generator and discriminator, we use learning rate 0.0001. We use batch size 2,000 in our experiments.

\subsection{Computing Correction Term on MOG and MNIST Experiments}
\label{app:details}
The main innovation of this work is the introduction of the correction term which encourages both players (leader and follower) to stay close to the ridge. In the main paragraph, we focused on local convergence of our algorithm and essentially the Hessian matrix is constant. However, we know that the curvature (Hessian) of the loss surface might change rapidly in practice especially when both players are parameterized by deep neural networks, making the computation of Hessian inverse highly non-trivial. Here, we summarize detailed steps in computing $\eta_{\x}\hessian_{\y\y}^{-1}\hessian_{\y\x}\nabla_{\x}f$:
\begin{enumerate}
    \item Computing $\eta_{\x}\hessian_{\y\x}\nabla_{\x}f$ by finite difference $\mathbf{b} := \nabla_{\y}f(\x, \y) - \nabla_{\y}f(\x - \eta_{\x}\nabla_{\x}f, \y)$;
    \item Assigning $\x \leftarrow \x - \eta_{\x}\nabla_{\x}f$ such that the Hessian $\hessian$ below are evaluated at the updated $\x$;
    \item Solving linear system $\left(\hessian_{\y\y}^2 + \lambda \iden\right) \Delta \y = \hessian_{\y\y} \mathbf{b}$ using conjugate gradient to get an approximation of $\Delta \y = \left(\hessian_{\y\y}^2 + \lambda \iden\right)^{-1}\hessian_{\y\y} \mathbf{b} \approx \eta_{\x}\hessian_{\y\y}^{-1}\hessian_{\y\x}\nabla_{\x}f$;
    \item Adapting the damping coefficient $\lambda$ by computing reduction ratio
    \begin{equation*}
        \rho = \frac{\|\mathbf{b} \|_2^2 - \|\nabla_{\y}f(\x, \y) -  \nabla_{\y}f(\x - \eta_{\x}\nabla_{\x}f, \y + \Delta \y)\|_2^2 }{\|\mathbf{b} \|_2^2 - \|\hessian_{\y \y}\Delta \y - \mathbf{b}\|_2^2},
    \end{equation*}
    which measures whether the loss surface is locally quadratic or not. We note that $\rho$ should be exactly $1$ in the quadratic case if $\lambda = 0$. We then adjust the damping with Levenberg-Marquardt style heuristic~\citep{martens2010deep} as follows:
    \begin{equation*}
        \lambda \longleftarrow 
        \begin{cases}
            1.1 \lambda       & \quad \text{if } 0 < \rho \leq 0.5\\
            0.9 \lambda       & \quad \text{if } \rho > 0.95 \\
            2.0 \lambda       & \quad \text{if } \rho \leq 0 \\
            \lambda           & \quad \text{otherwise}
        \end{cases}
    \end{equation*}
    \item Setting $\Delta \y \leftarrow \mathbf{0}$ if $\rho \leq 0$ (essentially we don't believe the approximation if $\rho$ is negative).
\end{enumerate}
When momentum or preconditioning is applied, we modify $\nabla_{\x}f$ above with the momentum or preconditioned version. To be specific, we apply momentum and preconditioning before the computation of the correction term, Here, we give an example of FR with momentum:
\begin{equation}
\begin{aligned}
\label{equ:momentum-practice}
    \left[\begin{matrix}
    \x_{t+1}\\ \y_{t+1}
    \end{matrix}\right]&\gets \left[\begin{matrix}
    \x_{t}\\ \y_{t}
    \end{matrix}\right]- \left[\begin{matrix}
    \iden & \\ -\hessian_{\y\y}^{-1}\hessian_{\y\x} & \iden
    \end{matrix}\right]\left[\begin{matrix}
    \eta_{\x}\nabla_{\x}f + \gamma \m_{\x,t}\\ -\eta_{\y}\nabla_{\y}f+\gamma \m_{\y,t}
    \end{matrix}\right] \\
    \left[\begin{matrix}
    \m_{\x,t+1}\\ \m_{\y,t+1}
    \end{matrix}\right]&\gets
    \left[\begin{matrix}
    \gamma\m_{\x,t}+\eta_{\x}\nabla_{\x}f\\ \gamma\m_{\y,t}-\eta_{\y}\nabla_{\y}f
    \end{matrix}\right]
\end{aligned}
\end{equation}
which is equivalent to Eqn.(\ref{equ:momentum}) in the quadratic case since it is a linear dynamical system. Nevertheless, we argue that it is more effective to use Eqn.(\ref{equ:momentum-practice}) when the loss surface is highly non-quadratic.

\section{Additional Results}

\subsection{The Role of Preconditioning}
\begin{figure}[h]
\vspace{-0.3cm}
	\centering
    \includegraphics[width=0.99\textwidth]{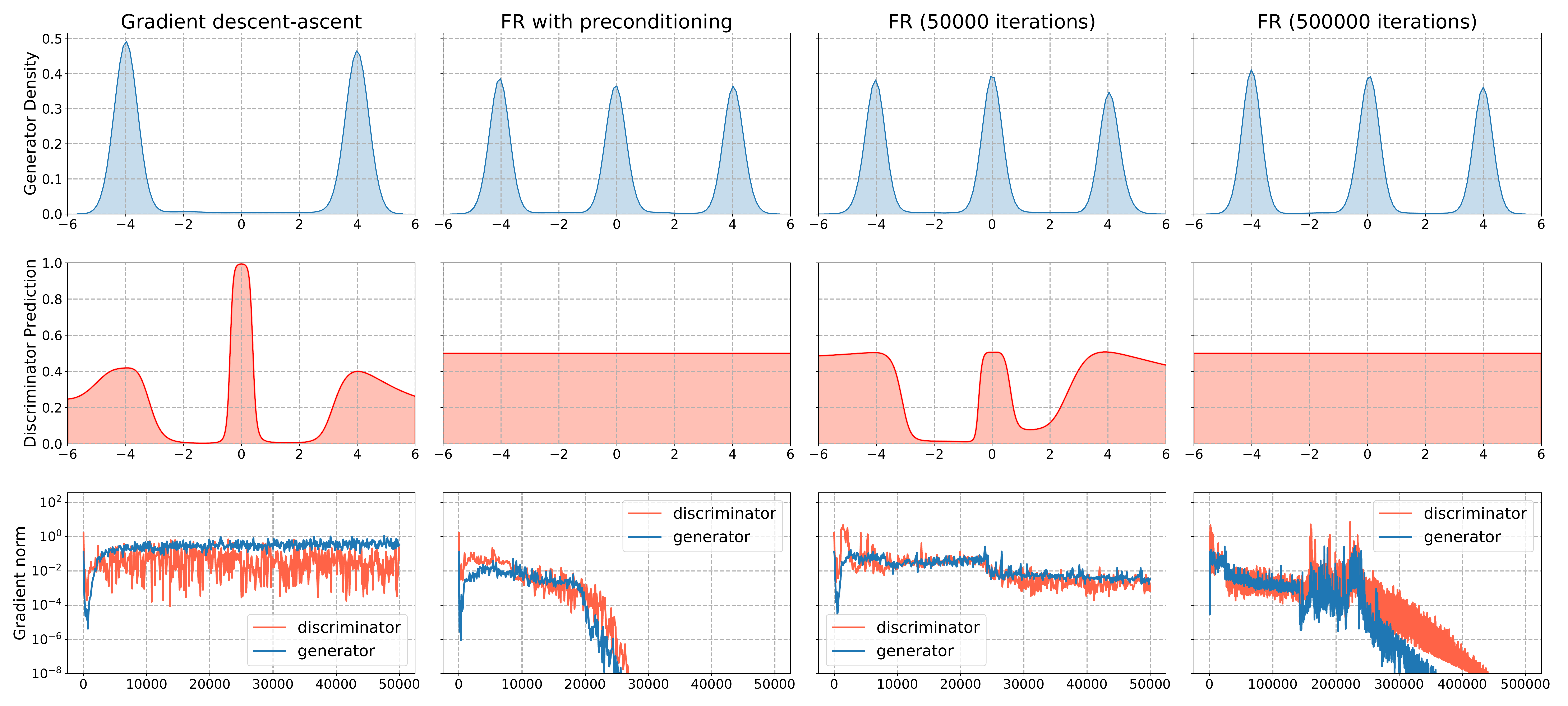}
    \vspace{-0.3cm}
    \caption{Ablation study on the effect of preconditioning. Vanilla FR also converges at the end of training though it takes much longer. The KDE plots use Gaussian kernel with bandwidth $0.1$.}
	\label{fig:gan-pred}
\end{figure}
Following the same setting as Fig.~\ref{fig:gan}, we investigate the effect of preconditioning for our algorithm. As we shown in section~\ref{sec:extension}, FR is compatible with preconditioning with same theoretical convergence guarantee. In Fig.~\ref{fig:gan}, we use diagonal preconditioning for accelerating the training. Here, we report the results of FR \emph{without} preconditioning in Fig.~\ref{fig:gan-pred}. For fair comparison, we also tune the learning rate for vanilla FR and the optimal learning rate is $0.05$. Our first observation is that vanilla FR does converge with 500,000 iterations which is consistent with our theoretical results. Particularly, the discriminator is being fooled at the end of training and the gradient vanishes. Our second observation is that it takes much longer to converge, which can be seen from the comparison between the second column (preconditioned version) and the third column. With the same time budget (50,000 iterations), preconditioned FR already converges as seen from the gradient norm curves while the vanilla FR is far from converged.

\subsection{The Role of Momentum}
\label{sec:momentum}
In this subsection, we discuss the effect of momentum in our algorithm. We first consider the following quadratic problem:
\begin{equation*}
    f(\x,\y) = -0.45x_1^2-0.5x_2^2-0.5y_1^2-0.05y_2^2+x_1y_2+x_2y_2.
\end{equation*}
In this problem, $(\mathbf{0},\mathbf{0})$ is a local (and global) minimax. We run FR with learning rate $\eta=0.2$ and momentum values $\gamma \in \{0.0, 0.5, 0.8\}$, and observe how fast the iterates approach the origin.

We also compare FR with gradient descent-ascent in this problem. Note that when learning rate ratio (ratio of the follower's learning rate to the leader's learning rate) is $1$, GDA diverges. We use a grid search for the follower's learning rate $\eta_{\y}\in\{0.1,0.2,0.4,0.8,1.6\}$ and learning rate ratio $c \in\{5,10,20,40,80\}$. We experiment with momentum $\gamma\in\{0.0,\pm 0.1,\pm 0.2,\pm 0.4,\pm 0.8\}$. The best result for GDA without momentum is achieved by $\eta_{\y}=0.8$, $c=20$; the best result for GDA with momentum is achieved by $\eta_{\y}=1.6$, $c=40$ and $\gamma=0.2$. The results are plotted in Fig.~\ref{fig:mmt_quadratic}.

\begin{figure}[ht]
    \centering
    \includegraphics[width=0.8\textwidth]{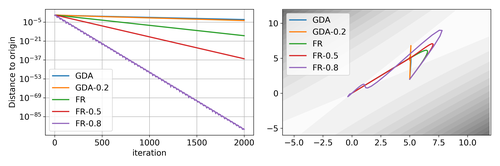}
    \vspace{-0.3cm}
    \caption{\textbf{Left:} distance to the origin for GDA, GDA with $0.2$ momentum, FR, FR with $0.5$ momentum and FR with $0.8$ momentum; \textbf{Right:} trajectory of each algorithm; we plot the values of $x_1$ and $y_1$ and the contour for the function value on the plane $(x_1,0,y_1,0)$.}
    \label{fig:mmt_quadratic}
\end{figure}
We can see that momentum speeds up FR dramatically. In contrast, GDA with momentum does not improve much over GDA without momentum. Moreover, under large momentum values (i.e. $\gamma=0.8$), GDA diverges even when using very large learning rate ratios.

\begin{figure}[h]
\vspace{-0.1cm}
	\centering
    \includegraphics[width=0.99\textwidth]{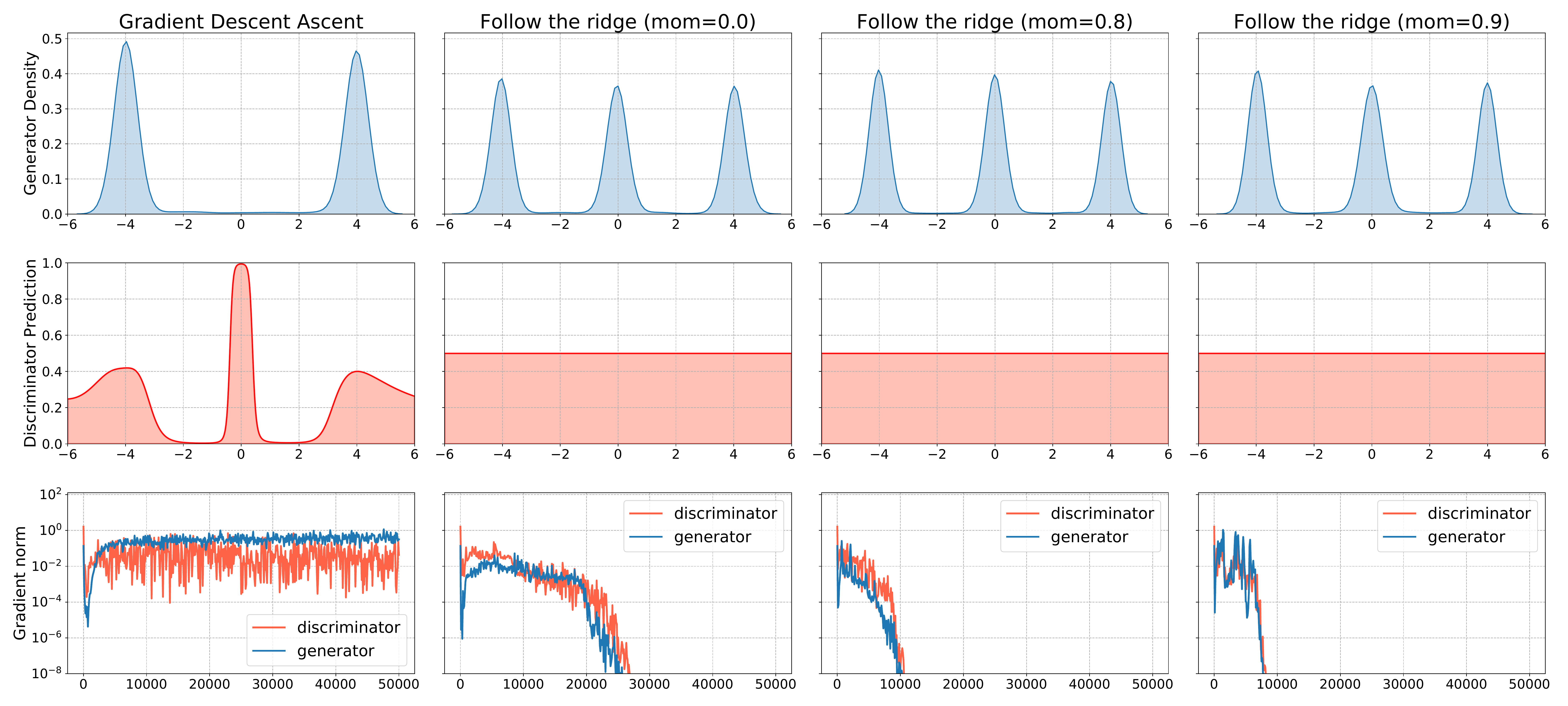}
    \vspace{-0.3cm}
    \caption{Empirical investigation on the effect of \emph{positive} momentum. With larger momentum coefficient (e.g., $\gamma = 0.9$), the convergence of FR gets further improved. The KDE plots use Gaussian kernel with bandwidth $0.1$.}
	\label{fig:gan-mom}
\end{figure}
We further test the acceleration effect of momentum on the mixture of Gaussian benchmark. Keeping all other hyperparameters the same as those used in Fig.~\ref{fig:gan}, we conduct experiments with momentum coefficient $\gamma \in \{0.8, 0.9\}$. As shown in the gradient norm plots of Fig.~\ref{fig:gan-mom}, FR with large positive momentum coefficient converges faster than the one with zero momentum (the second column). Particularly, FR is able to converge within 10,000 iterations with $\gamma = 0.9$, yielding roughly a factor of $3$ improvement in terms of convergence.

\subsection{Spectrum for GAN Model}
\begin{wrapfigure}[7]{R}{0.55\textwidth}
    \centering
    \vspace{-1cm}
    \includegraphics[width=0.54\textwidth]{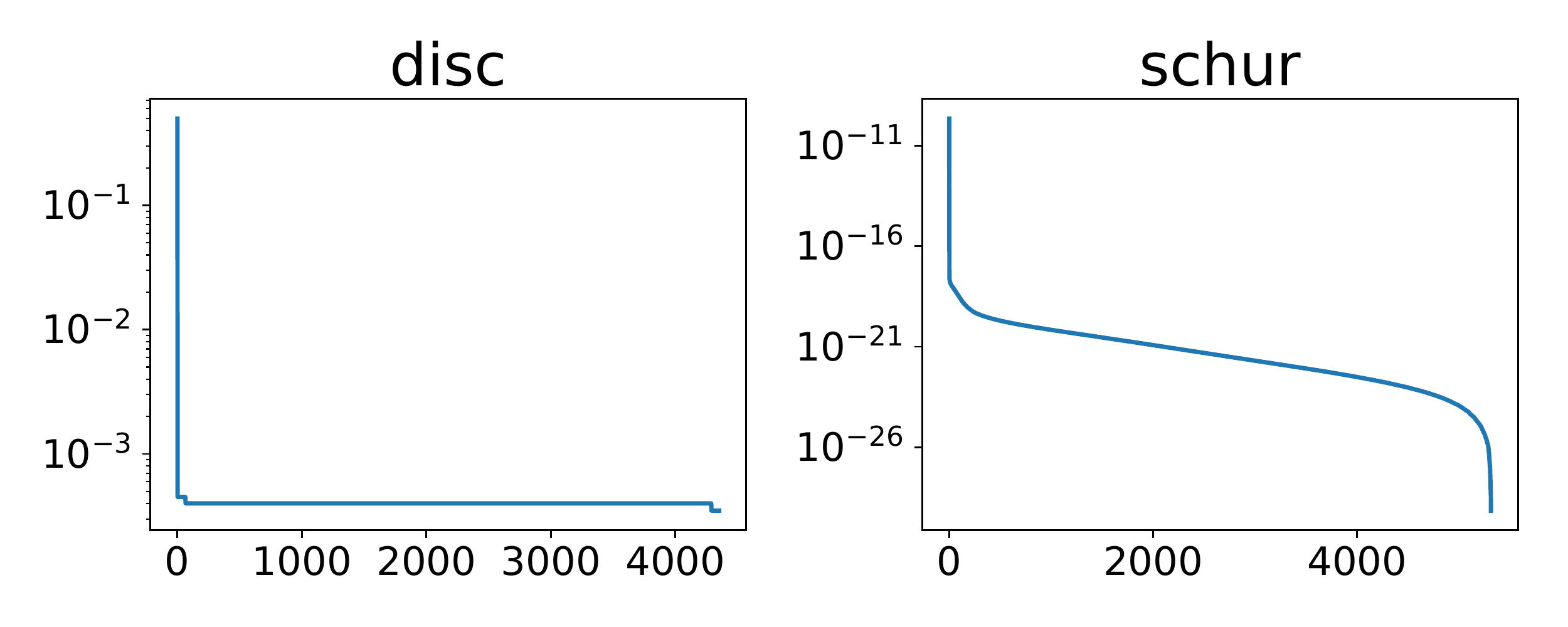}
    \vspace{-0.5cm}
    \caption{Top-20 eigenvalues.}
    \label{fig:gn}
\end{wrapfigure}
As we claimed in Section~\ref{sec:mog}, all eigenvalues of $\hessian_{\x\x}-\hessian_{\x\y}\hessian_{\y\y}^{-1}\hessian_{\y\x}$ are non-negative while all eigenvalues of $\hessian_{\y\y}$ are non-positive. Here we plot all eigenvalues of them in log scale. To be noted, we plot the eigenvalues for $-\hessian_{\y\y}$ for convenience. As expected, the Hessian matrix for the discriminator is negative semi-definite while the Schur compliment is positive semi-definite.
\end{document}